\newtheorem{Def}{Definition}[section]
\newtheorem{Thm}{Theorem}[section]
\newtheorem{Lem}{Lemma}[section]
\newtheorem{Cor}{Corollary}[section]
\newtheorem{Clm}{Claim}[section]
\DeclareMathOperator*{\argmin}{arg\,min}
\newcommand{\B}{\mathcal B}
\newcommand{\C}{\mathbb C}
\newcommand{\D}{\mathcal D}
\newcommand{\E}{\mathbb E}
\newcommand{\R}{\mathbb R}
\newcommand{\X}{\mathcal X}
\newcommand{\1}{\mathbf 1}
\newcommand{\vol}{\operatorname{vol}}
\let\svsqrt\sqrt
\newsavebox\Nsqrt
\def\sr#1{\ThisStyle{%
	\savebox\Nsqrt{\scalebox{.5}[1]{$\SavedStyle\svsqrt{\phantom{\cramped{#1#1}}}$}}%
	\ooalign{\usebox{\Nsqrt}\cr\kern.2pt\usebox{\Nsqrt}\cr\hfil$\SavedStyle\cramped{#1}$}}}
\def\*#1{\mathbf{#1}}
\title{Learning-to-learn non-convex\\piecewise-Lipschitz functions\thanks{Author addresses: \texttt{\{ninamf, mkhodak, dravyans, atalwalk\}@cs.cmu.edu}}}
\author{\large\normalfont
  Maria-Florina Balcan%\\
%   School of Computer Science\\
%   Carnegie Mellon University\\
%   Pittsburgh, PA 15213 \\
%   \texttt{ninamf@cs.cmu.edu} \\
  \And\large\normalfont
  Mikhail Khodak%\\
%   Department of Computer Science\\
%   Carnegie Mellon University\\
%   Pittsburgh, PA 15213 \\
%   \texttt{khodak@cmu.edu} \\
  \And\large\normalfont
  Dravyansh Sharma%\\%\thanks{Use footnote for providing further information
  %  about author (webpage, alternative address)---\emph{not} for acknowledging
  %  funding agencies.} \\
%   Department of Computer Science\\
%   Carnegie Mellon University\\
%   Pittsburgh, PA 15213 \\
%   \texttt{dravyans@cs.cmu.edu}\\
  \And\large\normalfont
  Ameet Talwalkar%\\%\thanks{Use footnote for providing further information
  %  about author (webpage, alternative address)---\emph{not} for acknowledging
  %  funding agencies.} \\
%   Carnegie Mellon University,\\
%   Determined AI\\
%   Pittsburgh, PA 15213 \\
%   \texttt{talwalkar@cmu.edu}  
  }
\begin{document}

\maketitle

% !TEX root = main.tex

\begin{abstract}

We analyze the meta-learning of the initialization and step-size of learning algorithms for piecewise-Lipschitz functions, a non-convex setting with applications to both machine learning and algorithms. 
Starting from recent regret bounds for the exponential forecaster on losses with dispersed discontinuities, we generalize them to be initialization-dependent and then use this result to propose a practical meta-learning procedure that learns both the initialization and the step-size of the algorithm from multiple online learning tasks. 
Asymptotically, we guarantee that the average regret across tasks scales with a natural notion of task-similarity that measures the amount of overlap between near-optimal regions of different tasks. 
Finally, we instantiate the method and its guarantee in two important settings:
robust meta-learning and multi-task data-driven algorithm design.

\end{abstract}
% !TEX root = main.tex

\section{Introduction}\label{sec:intro}

While learning-to-learn, or {\em meta-learning}, has long been an object of study \cite{thrun1998ltl}, in recent years it has gained significant attention as a multi-task paradigm for developing algorithms for learning in dynamic environments, from multiple sources of data, and in federated settings.
Such methods focus on using data gathered from multiple tasks to improve performance when faced with data from a new, potentially related task.
Among the more popular approaches to meta-learning is {\em initialization-based} meta-learning, in which the meta-learner uses multi-task data to output an initialization for an iterative algorithm such as stochastic gradient descent (SGD) \cite{finn2017maml}.
The flexibility of this approach has led to its widespread adoption in areas, such as robotics \cite{duan2017imitation} and federated learning \cite{chen2018fedmeta}, and to a growing number of attempts to understand it, both empirically and theoretically \cite{denevi2019ltlsgd,khodak2019adaptive,fallah2020meta,raghu2020anil,saunshi2020meta}.
However, outside some stylized setups our learning-theoretic understanding of how to meta-learn an initialization is largely restricted to the convex Lipschitz setting.

We relax both assumptions to study the meta-learning of online algorithms over piecewise-Lipschitz functions, which can be nonconvex and highly discontinuous.
As no-regret online learning over such functions is impossible in-general, we study the case of piecewise-Lipschitz functions whose discontinuities are {\em dispersed}, i.e. which do not concentrate in any small compact subset of the input domain \cite{balcan2018dispersion}.
Such functions arise frequently in {\em data-driven algorithm design}, in which the goal is to learn the optimal parameter settings of algorithms for difficult (often NP-Hard) problems over a distribution or sequence of instances \cite{balcan2020data};
for example, a small change to the metric used to determine cluster linkage can lead to a discontinuous change in the classification error \cite{balcan2019learning}.
In this paper, we also demonstrate that such losses are relevant in the setting of adversarial robustness, where we introduce a novel online formulation.
For both cases, the associated problems are often solved across many time periods or for many different problem domains, resulting in natural multi-task structure that we might hope to use to improve performance.
To the best of our knowledge, ours is the first theoretical study of meta-learning in both of these application settings.

In the single-task setting the problem of learning dispersed functions can be solved using simple methods such as the exponentially-weighted forecaster.
To design an algorithm for learning to initialize online learners in this setting, we propose a method that optimizes a sequence of data-dependent upper-bounds on the within-task regret \cite{khodak2019adaptive}. The result is an averaged bound that improves upon the regret of the single-task exponential forecaster so long as there exists an initial distribution that can compactly contain many of the within-task optima of the different tasks.
%We do this by proposing methods that optimize a sequence of data-dependent upper-bounds on the within-task regret \cite{khodak2019adaptive}, leading to an averaged bound that improves upon the regret of the single-task exponential forecaster so long as there exists an initial distribution that can compactly contain many of the within-task optima of the different tasks. 
Designing the meta-procedure is especially challenging in our setting because it involves online learning over a set of distributions on the domain.
To handle this we study a ``prescient'' form of the classic follow-the-regularized leader (FTRL) scheme that is run over an unknown discretization;
we then show the existence of another algorithm that plays the same actions but uses only known information, thus attaining the same regret while being practical to implement.

To demonstrate the usefulness of our method, we study this algorithm in two settings.

{\bf Multi-task data-driven algorithm design.} We consider data-driven tuning of the parameters of combinatorial optimization algorithms for hard problems such as knapsack and clustering.
The likely intractability of these problems have led to several approaches to study them in more realistic settings, such as smoothed analysis \cite{spielman2004smoothed} and data-driven algorithm configuration \cite{balcan2020data}.
We view our meta-learning approach as a refinement on the latter in which we allow not only a distribution of instances but multiple distributions of related instances that can help learn a good algorithm. Our setting is more realistic than those considered in prior work. It is more challenging than learning from i.i.d. instances \cite{gupta2017pac,balcan2017learning}, but at the same time less pessimistic than online learning over adversarial problem instances~\cite{balcan2018dispersion}, as it allows us to leverage similarity of problem instances coming from different but related distributions.
We instantiate our bounds theoretically on several problems where the cost functions are piecewise-constant in the tuned parameters, allowing our meta-procedure to learn the right initial distribution for exponential forecasters. This includes well-known combinatorial optimization problems like finding the maximum weighted independent set (MWIS) of vertices on a graph, solving quadratic programs with integer constraints using algorithms based on the celebrated Goemans-Williamson algorithm, and mechanism design for combinatorial auctions.
Then we consider experimentally the problem of tuning the right $\alpha$ for the $\alpha$-Lloyd's family of clustering algorithms~\cite{balcan2018data}.
In experimental evaluations on two datasets---a synthetic Gaussian mixture model and the well-known Omniglot dataset from meta-learning \cite{lake2015human}---our meta-procedure leads to improved clustering accuracy compared to single-task learning to cluster. The results holds for both one-shot and five-shot clustering tasks.
We also study our results for a family of greedy algorithms for the knapsack problem introduced by \cite{gupta2017pac} and obtain similar results for a synthetic dataset.

{\bf Online robust meta-learning.} The second instantiation of our meta-learning procedure is to a new notion of adversarial robustness for the setting of online learning, where our results imply robust meta-learning in the presence of outliers. %In addition to the data-driven algorithm design application, our results have significance for robust learning in the presence of adversarial perturbations. 
In this setting, the adversary can make (typically small) modifications to some example $x\in\X$, which can result in potentially large changes to the corresponding loss value $l_h(x)$, where $h\in\mathcal{H}$ is our hypothesis. For instance, consider the well-studied setting of adversarial examples for classification of images using deep neural networks \cite{nguyen2015deep,brendel2020adversarial}. Given a neural network $f$, the adversary can perturb a datapoint $x$ to a point $x'$, say within a small $L_p$-ball around $x$, such that $f(x)=f(x')$ but the true label of $x'$ does not match $x$, and therefore $l_f(x)\ne l_f(x')$. In general, under the adversarial influence, we observe a {\it perturbed loss} function $\Tilde{l}_h(x)=l_h(x)+a_h(x)$. Typically we are interested in optimizing both the  perturbed loss $\Tilde{l}_h(x)$, i.e. measuring performance relative to optimum for adversarially perturbed losses, and the {\it true loss} $l_h(x)$ (performance on the unobserved, unperturbed loss). For example, in the online learning setting, \cite{agarwal2019online}~consider perturbed loss minimization for linear dynamical systems, while \cite{resler2019adversarial} look at true $\{0,1\}$ loss minimization in the presence of adversarial noise. Our approach ensures that regret for both the perturbed and true loss are small, for piecewise-Lipschitz but dispersed adversaries.
% !TEX root = main.tex

\subsection{Related work}\label{sec:related}

The success of meta-learning has led to significant theoretical effort to understand it.
Most efforts studying initialized-based meta-learning focus on the convex Lipschitz setting \cite{denevi2019ltlsgd,khodak2019provable};
work studying inherently nonconvex modeling approaches instead usually study multi-task representation learning~\cite{balcan2015lifelong,maurer2016mtl,du2021fewshot,tripuraneni2021provable} or target optimization, e.g. stationary point convergence \cite{fallah2020meta}.
An exception is a study of linear models over Gaussian data showing that nonconvexity is critical to meta-learning an initialization that exploits low-rank task structure \cite{saunshi2020meta}.
There is also work extending results from the neural tangent kernel literature to meta-learning \cite{zhou2021meta}, but in this case the objective becomes convex.
On the other hand, we study initializations for learning a class of functions that can be highly non-convex and have numerous discontinuities.
Theoretically, our work uses the Average Regret-Upper-Bound Analysis (ARUBA) strategy~\cite{khodak2019adaptive} for obtaining a meta-update procedure for initializing within-task algorithms, which has been applied elsewhere for privacy \cite{li2020dp} and federated learning \cite{khodak2021fedex};
the main technical advance in our work is in providing the guarantees for it in our setting, which is challenging due to the need to learn over a space of probability measures.

%\misha{Related work on algorithm configuration and robustness.}
Data-driven configuration is the selection of an algorithm from a parameterized family by learning over multiple problem instances \cite{gupta2017pac,balcan2017learning}. In other words, it is `hyperparameter tuning' with formal guarantees, and has applications to integer programming, clustering, and learning with limited labeled data \cite{balcan2018learning,balcan2019learning,balcan2021data}. In this work, we show how this general approach can be made even more effective by enabling it to adapt to task similarity. We also show applications of our results to robust meta-learning in the presence of outliers in the dataset \cite{pillutla2019robust,kong2020robust}.
While previous work on robust online learning has considered adversaries with bounded perturbation in the online learning setting \cite{agarwal2019online,resler2019adversarial}, our results allow potentially unbounded perturbations, provided the adversary uses a smooth distribution. That is, the adversarial attack can be thought of as a distribution of perturbations, similar to the smoothed analysis approach of \cite{spielman2004smoothed}. In the offline setting, a similar attack is studied in the context of deep network feature-space attacks by \cite{balcan2020power}. We also remark that our formulation has a poisoning aspect, since we do not observe the clean loss $l_h(x)$, which is of particular interest in federated learning \cite{bagdasaryan2020backdoor,tolpegin2020data}. Also, note that unlike the typical applications of data-driven design where optimization is over the dual loss function, i.e. loss as a function of the algorithm parameter for a fixed sample $x\in\X$, here we consider learning loss or confidence functions over the input space $\X$.
% !TEX root = main.tex

\section{Preliminaries and initialization-dependent learning of dispersed functions}\label{sec:dispersion}

In this section we introduce our setup and notation for online learning of piecewise-Lipschitz functions in a multi-task environment.
We then generalize existing results for the single-task setting in order to obtain within-task regret bounds that depend on both the initialization and the task data.
This is critical for both defining a notion of task similarity and devising a meta-learning procedure.

\subsection{Meta-learning setup}

Following past setups \cite{alquier2017lifelong,denevi2019meta,khodak2019adaptive}, for some $T,m>0$ and all $t\in[T]$ and $i\in[m]$ we consider a meta-learner faced with a sequence of $Tm$ loss functions $\ell_{t,i}:C\mapsto[0,1]$ over a compact subset $C\subset\R^d$ that lies within a ball $\B(\rho,R)$ of radius $R$ around some point $\rho\in\R^d$.
Here we used the notation $[n]=\{1,\dots,n\}$.
Before each loss function $\ell_{t,i}$ the meta-learner must pick an element $\rho_{t,i}\in C$ before then suffering a loss or cost $\ell_{t,i}(\rho_{t,i})$.
For a fixed $t$, the subsequence $\ell_{t,1},\dots,\ell_{t,m}$ defines a {\bf task} for which we expect a single element $\rho_t^\ast\in C$ to do well, and thus we will use the {\bf within-task regret} on task $t$ to describe the quantity
\begin{equation}\label{eq:regret}
\*R_{t,m}=\sum_{i=1}^m\ell_{t,i}(\rho_{t,i})-\ell_{t,i}(\rho_t^\ast)\quad\textrm{where}\quad\rho_t^\ast\in\argmin_{\rho\in C}\sum_{i=1}^m\ell_{t,i}(\rho)
\end{equation}
In the single-task setting the goal is usually to show that $R_{t,m}$ is sublinear in $m$, i.e. that the average loss decreases with more rounds.
A key point here is that the functions we consider can have numerous global optima.
In this work we will assume, after going through the $m$ rounds of task $t$, that we have oracle access to a single fixed optimum for $t$, which we will refer to using $\rho_t^\ast$ and use in both our algorithm and to define the task-similarity.
Note that in the types of applications we are interested in---piecewise-Lipschitz functions---the complexity of computing optima scales with the number of discontinuities.
In the important special case of piecewise-constant functions, this dependency becomes logarithmic \cite{cohen2017online}.
Thus this assumption does not affect the usefulness of the result.
 
Our goal will be to improve the guarantees for regret in the single-task case by using information obtained from solving multiple tasks.
In particular, we expect average performance across tasks to improve as we see more tasks; 
to phrase this mathematically we define the {\bf task-averaged regret}
\begin{equation}\label{eq:tar}
\*{\bar R}_{T,m}=\frac1T\sum_{t=1}^T\*R_{t,m}=\frac1T\sum_{t=1}^T\sum_{i=1}^m\ell_{t,i}(\rho_{t,i})-\ell_{t,i}(\rho_t^\ast)
\end{equation}
and claim improvement over single-task learning if in the limit of $T\to\infty$ it is smaller than $\*R_{t,m}$.
Note that for simplicity in this work we assume all tasks have the same number of rounds within-task, but as with past work our results are straightforward to extend to the more general setting.

\subsection{Learning piecewise-Lipschitz functions}

We now turn to our target functions and within-task algorithms for learning them:
piecewise-Lipschitz losses, i.e. functions that are $L$-Lipschitz w.r.t. the Euclidean norm everywhere except on measure zero subsets of the space; 
here they may have arbitrary jump discontinuities so long they still bounded between $[0,1]$.
Apart from being a natural setting of interest due to its generality compared to past work on meta-learning, this class of functions has also been shown to have important applications in data-driven algorithm configuration \cite{balcan2018dispersion};
there these functions represent the cost, e.g. an objective value or time-complexity, of algorithms for difficult problems such as integer programming, auction design, and clustering.
%In this work we also show how this setting is relevant in robust machine learning.
%\misha{More or less description here? Additional citations?}

This literature has also shown lower bounds demonstrating that no-regret learning piecewise-Lipschitz function is impossible in general, necessitating assumptions about the sequence.
One such condition is {\em dispersion}, which requires that the discontinuities are not too concentrated.

\newpage
\begin{Def}[\cite{balcan2018dispersion}]\label{def:dis} 
The sequence of random loss functions $\ell_1, \dots,\ell_m$ is said to be $\beta$-{\bf dispersed} with Lipschitz constant $L$ if, for all $m$ and for all $\epsilon\ge m^{-\beta}$, we have that, in expectation over the randomness of the functions, at most
$\tilde{O}(\epsilon m)$ functions (the soft-O notation suppresses dependence on quantities beside $\epsilon,m$ and $\beta$, as well as logarithmic terms)
are not $L$-Lipschitz for any pair of points at distance $\epsilon$ in the domain $\C$. That is, for all $m$ and for all $\varepsilon\ge m^{-\beta}$, 
\begin{equation}
\E\left[
\max_{\begin{smallmatrix}\rho,\rho'\in\C\\\|\rho-\rho'\|_2\le\epsilon\end{smallmatrix}}\|\big\lvert
\{ i\in[m] \mid\ell_i(\rho)-\ell_i(\rho')>L\|\rho-\rho'\|_2\} \big\rvert \right] 
\le \tilde{O}(\epsilon m)
\end{equation}
\end{Def}

Assuming a sequence of $m$ $\beta$-dispersed loss functions and initial distribution $w_1$ set to the uniform distribution over $C$ and optimize the step size parameter, the exponential forecaster presented in Algorithm~\ref{alg:ef} achieves sublinear regret $\tilde{O}(\sqrt{dm\log(Rm)}+(L+1)m^{1-\beta})$.
While this result achieves a no-regret procedure, its lack of dependence on both the task-data and on the chosen initialization makes it difficult to meta-learn.
In the following theorem, we generalize the regret bound for the exponential forecaster to make it data-dependent and hyperparameter-dependent:

\begin{Thm}\label{thm:exp-forc-meta}
	Let $\ell_1,\dots,\ell_m: C \mapsto [0, 1]$ be any sequence of piecewise $L$-Lipschitz functions that are $\beta$-dispersed. Suppose $C \subset \R^d$ is contained in a ball of radius $R$. The exponentially weighted forecaster (Algorithm \ref{alg:ef}) has expected regret $\*R_m\le m\lambda +\frac{\log (1/Z)}{\lambda}+\tilde{O}((L+1)m^{1-\beta})$, where $Z=\frac{\int_{\B(\rho^*,m^{-\beta})}w(\rho)d\rho}{\int_{C}w(\rho)d\rho}$ for  $\rho^*$ the optimal action in hindsight.
\end{Thm}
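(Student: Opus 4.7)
The plan is to run the classical potential-function analysis of the exponentially weighted forecaster, modified so that the lower bound on the terminal potential is expressed through the initial density $w$ rather than uniform measure. Writing $w_1\equiv w$, $w_{t+1}(\rho)=w_t(\rho)\,e^{-\lambda\ell_t(\rho)}$, $W_t=\int_C w_t(\rho)\,d\rho$, and $p_t=w_t/W_t$ (the density from which $\rho_t$ is sampled), I would telescope
\[
\log(W_{m+1}/W_1)=\sum_{t=1}^m \log\E_{\rho\sim p_t}\!\left[e^{-\lambda\ell_t(\rho)}\right].
\]
For the upper bound I would use $e^{-x}\le 1-x+x^2$ on $x\in[0,1]$ (applicable since $\lambda\le 1$, $\ell_t\in[0,1]$) together with $\log(1+y)\le y$ to get $\log\E_{p_t}[e^{-\lambda\ell_t}]\le -\lambda\,\E_{p_t}[\ell_t]+\lambda^2$, and sum to obtain $\log(W_{m+1}/W_1)\le -\lambda\sum_{t=1}^m \E[\ell_t(\rho_t)]+m\lambda^2$.

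For the lower bound I would restrict the defining integral of $W_{m+1}$ to $\B(\rho^*,r)$ with $r=m^{-\beta}$, then control $\sum_t\ell_t(\rho)$ uniformly over this ball using dispersion. Define
\[
N=\max_{\rho,\rho'\in C,\,\|\rho-\rho'\|\le r}\bigl|\{t\in[m]:\ell_t(\rho)-\ell_t(\rho')>L\|\rho-\rho'\|\}\bigr|,
\]
so that Definition~\ref{def:dis} gives $\E[N]\le\tilde O(m^{1-\beta})$. Any $\rho\in\B(\rho^*,r)$ then deterministically satisfies $\sum_t\ell_t(\rho)\le\sum_t\ell_t(\rho^*)+Lrm+N$, whence
\[
\frac{W_{m+1}}{W_1}\;\ge\;\int_{\B(\rho^*,r)}\frac{w(\rho)}{W_1}\,e^{-\lambda\sum_t\ell_t(\rho)}\,d\rho\;\ge\;Z\,\exp\!\bigl(-\lambda\textstyle\sum_t\ell_t(\rho^*)-\lambda(Lrm+N)\bigr).
\]

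Combining the upper and lower bounds on $\log(W_{m+1}/W_1)$, dividing by $\lambda$, and rearranging yields
\[
\E\!\left[\sum_t\ell_t(\rho_t)\right]-\sum_t\ell_t(\rho^*)\;\le\; m\lambda\;+\;\frac{\log(1/Z)}{\lambda}\;+\;Lrm+N,
\]
and taking one further expectation over the loss randomness absorbs $Lrm+\E[N]$ into the claimed $\tilde O((L+1)m^{1-\beta})$. The step I expect to be the main subtlety is the dispersion-based lower bound: because $\rho$ is integrated against a (generally non-atomic) density $w$, a per-point expectation bound on $\sum_t(\ell_t(\rho)-\ell_t(\rho^*))$ is not enough. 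What lets the argument go through is that Definition~\ref{def:dis} phrases dispersion as the expectation of a single maximum over all close pairs, yielding one random quantity $N$ that can be factored out of the integral uniformly in $\rho\in\B(\rho^*,r)$.
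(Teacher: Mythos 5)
Your argument is essentially the same potential-function analysis the paper uses: telescope $\log(W_{m+1}/W_1)$, upper-bound it via a second-order bound on $e^{-\lambda\ell_t}$ to extract the algorithm's expected cumulative loss, lower-bound it by restricting the integral to $\B(\rho^*,m^{-\beta})$ (which produces the $\log(1/Z)$ term), and control the loss gap on that ball uniformly via the dispersion maximum, taking expectation at the end. The paper phrases it in terms of utilities $u_t$ rather than losses and uses $e^{\lambda x}\le 1+(e^\lambda-1)x$ in place of your $e^{-x}\le 1-x+x^2$, but these are cosmetic; your observation that the uniform-over-$\rho$ form of the dispersion definition is what lets the random count be pulled out of the integral is exactly the point the paper relies on as well.
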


The proof of this result adapts past analyses of Algorithm \ref{alg:ef};
setting step-size $\lambda$ appropriately recovers the previously mentioned bound.
The new bound is useful due to its explicit dependence on both the initialization $w$ and the optimum in hindsight via the $\log(1/Z)$ term.
Assuming $w$ is a (normalized) distribution, this effectively measures the overlap between the chosen initialization and a small ball around the optimum;
we thus call $$-\log Z=-\log\frac{\int_{\B(\rho^\ast,m^{-\beta})}w(\rho)d\rho}{\int_Cw(\rho)d\rho}$$ the {\bf negative log-overlap} of initialization $w(.)$ with the optimum $\rho^*$.

We also obtain an asymptotic lower bound on the expected regret of any algorithm by extending the argument of \cite{balcan2020learning} to the multi-task setting. We show that for finite $D^*$ we must suffer $\Tilde{\Omega}(m^{1-\beta})$ regret, which limits the improvement we can hope to achieve from task-similarity.

\begin{Thm}\label{thm:dispersion-lb}
	There is a sequence of piecewise $L$-Lipschitz $\beta$-dispersed functions $\ell_{i,j}: [0,1] \mapsto [0, 1]$,  whose optimal actions in hindsight $\argmin_{\rho}\sum_{i=1}^ml_{t,i}(\rho)$ are contained in some fixed ball of diameter $D^*$, for which any algorithm has expected regret $\*R_m\ge \tilde{\Omega}(m^{1-\beta})$.
\end{Thm}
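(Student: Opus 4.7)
The plan is to lift the single-task lower bound of \cite{balcan2020learning} to our meta-learning setup by confining each task's optimum to a ball of diameter $D^*$ while preserving the per-task $\tilde\Omega(m^{1-\beta})$ bound. For each task $t$ I independently draw i.i.d.\ discontinuity locations $c_{t,1},\dots,c_{t,m}$ from a distribution with bounded density supported on a fixed interval $I\subset[0,1]$ of length $D^*$, together with i.i.d.\ orientations $\sigma_{t,i}\in\{0,1\}$, and define the piecewise-constant (hence piecewise $L$-Lipschitz) loss
\begin{equation*}
\ell_{t,i}(\rho)=\sigma_{t,i}\mathbf{1}[\rho\ge c_{t,i}]+(1-\sigma_{t,i})\mathbf{1}[\rho<c_{t,i}].
\end{equation*}
Since each $\ell_{t,i}$ has its sole discontinuity in $I$, the partial sum $\sum_{i=1}^m\ell_{t,i}$ is piecewise constant with all breakpoints in $I$, so the per-task optimum $\rho_t^\ast$ lies in $I$, giving the diameter-$D^*$ constraint for free.

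To verify $\beta$-dispersion, I would bound the expected number of $c_{t,i}$ falling in any $\epsilon$-window by $\tilde O(\epsilon m)$ for every $\epsilon\ge m^{-\beta}$. Because the $c_{t,i}$ are i.i.d.\ with density at most a constant multiple of $1/D^*$, Chernoff concentration handles any fixed window, and a union bound over an $m^{-\beta}$-net of window positions extends the bound uniformly over all $\epsilon$-windows. This is the standard dispersion argument for i.i.d.\ smoothly-distributed discontinuities.

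For the regret lower bound I would apply the single-task argument of \cite{balcan2020learning} to each task independently. The key is that $\rho_t^\ast$---being essentially the argmin of a $\pm1$ random walk indexed by the sorted $c_{t,i}$'s with i.i.d.\ step signs $\sigma_{t,i}$---is unpredictable at precision finer than $m^{-\beta}$ from any prefix of the losses: an algorithm must effectively identify which of $\Theta(m^{\beta})$ sub-intervals of $I$ of length $m^{-\beta}$ contains $\rho_t^\ast$. A Yao-style minimax argument then shows that any deterministic algorithm against this random prior must incur $\Omega(1)$ expected instantaneous regret on a constant fraction of rounds relative to $\rho_t^\ast$, summing to $\*R_m\ge\tilde\Omega(m^{1-\beta})$.

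The main obstacle is the familiar tension between $\beta$-dispersion, which caps the concentration of discontinuities near the optimum, and the lower bound, which exploits just enough local randomness to obscure the optimum within precision $m^{-\beta}$. The construction of \cite{balcan2020learning} already balances these competing needs in the single-task case, so our only new ingredient is localizing the support of the $c_{t,i}$ to $I$ of length $D^*$, which places the optima in a common ball of diameter $D^*$ without affecting dispersion (the density remains a bounded constant provided $D^*=\Omega(1)$) or the lower bound (which depends only on the local structure near $\rho_t^\ast$).
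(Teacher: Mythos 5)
There is a genuine gap: your oblivious i.i.d.\ construction cannot yield the $\tilde\Omega(m^{1-\beta})$ bound for $\beta<\tfrac12$. With i.i.d.\ thresholds and fair random orientations, $\E[\ell_{t,i}(\rho)]=\tfrac12$ for \emph{every} $\rho$ and every round, so every algorithm's expected cumulative loss is exactly $m/2$; meanwhile, the cumulative loss as a function of $\rho$ is a $\pm1$ random walk over the $m$ sorted thresholds, so the hindsight minimum is $m/2-\Theta(\sqrt m)$ in expectation. The expected regret is therefore $\Theta(\sqrt m)$ for every algorithm --- a matching upper and lower bound, not $\tilde\Omega(m^{1-\beta})$. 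To get a $\Theta(m^{1-\beta})$ deviation from a $\pm1$-step random-walk minimum one would need $\Theta(m^{2(1-\beta)})$ steps, which exceeds $m$ once $\beta<\tfrac12$. Your final line is also internally inconsistent: ``$\Omega(1)$ expected instantaneous regret on a constant fraction of rounds'' would sum to $\Omega(m)$, not $\tilde\Omega(m^{1-\beta})$, and in any case the per-round expected loss against your prior is $\tfrac12$ for every play, so no such per-round bound holds.

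The paper's proof uses the same threshold/orientation function class but an \emph{adaptive} adversary. Most rounds place discontinuities in a fixed middle sub-interval of $I$ (just to pad out $m$ rounds while respecting dispersion), and the key $\Theta(m^{1-\beta})$ rounds run a ``halving adversary'' that places each discontinuity at the midpoint of the current candidate interval and picks the orientation uniformly at random. The algorithm is forced to be wrong on half of these rounds in expectation, while the hindsight optimum (the surviving interval) is right on all of them, yielding $\Omega(m^{1-\beta})$ regret. Critically, this exploits the slack in the $\beta$-dispersion definition --- discontinuities may concentrate arbitrarily at scales below $m^{-\beta}$ --- which your bounded-density i.i.d.\ prior intentionally gives up. To repair your argument you would need to inject exactly this kind of adaptive, sub-$m^{-\beta}$-scale concentration near the evolving optimum, at which point you have essentially reproduced the paper's halving construction.
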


\begin{algorithm}[!t]
	\caption{Exponential Forecaster}
	\label{alg:ef}
	\begin{algorithmic}[1]
		\STATE {\bfseries Input:} step size parameter $\lambda \in (0, 1]$, initialization $w:C\rightarrow \R_{\ge 0}$.
		\STATE{Initialize $w_1=w$}
		\FOR{$i=1,2,\dots,m$}
		\STATE{$W_i:=\int_{C}w_i(\rho)d\rho$}
		\STATE{Sample $\rho_i$ with probability proportional to $w_i(\rho_i)$, i.e. with probability
			$p_{i}(\rho_i)=\frac{w_i(\rho_i)}{W_i}$}
		\STATE{Suffer $\ell_i(\rho_i)$ and observe $\ell_i(\cdot)$}
		\STATE{For each $\rho\in C, \text{ set }w_{i+1}(\rho)=e^{-\lambda\ell_i(\rho)}w_{i}(\rho)$}
		\ENDFOR
	\end{algorithmic}
\end{algorithm}

\subsection{Task-similarity}

Before proceeding to our discussion of meta-learning, we first discuss what we might hope to achieve with it;
specifically, we consider what a reasonable notion of task-similarity is in this setting.
Note that the Theorem~\ref{thm:exp-forc-meta} regret bound has three terms, of which two depend on the hyperparameters and the last is due to dispersion and cannot be improved via better settings.
Our focus will thus be on improving the first two terms, which are the dominant ones due to the dependence on the dimensionality and the distance from the initialization encoded in the negative log overlap.
In particular, when the initialization is the uniform distribution then this quantity depends inversely on the size of a small ball around the optimum, which may be quite small.
Via meta-learning we hope to assign more of the probability mass of the initializer to areas close to the optimum, which will decrease these terms.
On average, rather than a dependence on the volume of a small ball we aim to achieve a dependence on the {\bf average negative log-overlap} 
\begin{equation}\label{eq:tasksim}
V^2=-\min_{w:C\mapsto\R_{\ge0},\int_Cw(\rho)d\rho=1}\frac1T\sum_{t=1}^T\log\int_{\B(\rho_t^\ast,m^{-\beta})}w(\rho)d\rho
\end{equation}
which can be much smaller if the task optima $\rho_t^\ast$ are close together;
for example, if they are the same then $V=0$, corresponding to assigning all the initial weight within the common ball $\B(\rho^\ast,m^{-\beta})$ around the shared optima. This is also true if $\vol(\cap_{t\in T}\B(\rho_t^\ast,m^{-\beta}))>0$, as one can potentially initialize with all the weight in the intersection of the balls. On the other hand if $\vol(\cap_{t\in T}\B(\rho_t^\ast,m^{-\beta}))=0$, $V>0$. For example, if a $p$-fraction of tasks have optima $\rho_0$ and the remaining at $\rho_1$ with $||\rho_0-\rho_1||>2m^{-\beta}$ the task similarity is given by the binary entropy function $V=H_b(p)=-p\log p-(1-p)\log(1-p)$.

The settings of Algorithm~\ref{alg:ef} that achieve the minimum in the definition of $V$ are directly related to $V$ itself: the optimal initializer is the distribution achieving $V$ and the optimal step-size is $V/\sqrt m$.
Note that while the explicit definition requires computing a minimum over a set of functions, the task-similarity can be computed using the discretization constructed in Section~\ref{sec:meta}.
% !TEX root = main.tex

\section{An algorithm for meta-learning the initialization and step-size}

Having established a single-task algorithm and shown how its regret depends on the initialization and step-size, we move on to meta-learning these hyperparameters.
Recall that our goal is to make the task-averaged regret \eqref{eq:tar} small, in particular to improve upon the baseline of repeatedly running Algorithm~\ref{alg:ef} from the uniform distribution, up to $o_T(1)$ terms that vanish as we see more tasks.
This accomplishes the meta-learning goal of using multiple tasks to improve upon single-task learning.

In this paper, we use the strategy of running online learning algorithms on the data-dependent regret guarantees from above \cite{khodak2019adaptive}.
If we can do so with sublinear regret in $T$, then we will improve upon the single-task guarantees up to $o_T(1)$ terms, as desired.
Specifically, we are faced with a sequence of regret-upper-bounds $U_t(w,v)=(v+f_t(w)/v)\sqrt m+g(m)$ on nonnegative functions $w$ over $C$ and positive scalars $v>0$.
Note that $g(m)$ cannot be improved via meta-learning, so we will focus on learning $w$ and $v$.
To do so, we run two online algorithms, one over the functions $f_t$ and the other over $h_t(v)=v+f_t(w_t)/v$, where $w_t$ is set by the first procedure.
As shown in the following result, if both procedures have sublinear regret then our task-averaged regret will have the desired properties:
\begin{Thm}\label{lem:aruba}
	Assume each task $t\in[T]$ consists of a sequence of $m$ $\beta$-dispersed piecewise $L$-Lipschitz functions $\ell_{t,i}:C\mapsto[0,1]$.
	Let $f_t$ and $g$ be functions such that the regret of Algorithm~\ref{alg:ef} run with step-size $\lambda=v\sqrt m$ for $v>0$ and initialization $w:C\mapsto\R_{\ge0}$ is bounded by $U_t(w,v)=(v+f_t(w)/v)\sqrt m+g(m)$.
	Suppose we have a procedure that achieves $F_T(w)$ regret w.r.t. any $w:C\mapsto\R_{\ge0}$ by playing actions $w_t:C\mapsto\R_{\ge0}$ on $f_t$ and another procedure that achieves $H_T(v)$ regret w.r.t. any $v>0$ by playing actions $v_t>0$ on $h_t(v)=v+f_t(w_t)/v$, where $H_T$ is non-increasing on the positive reals.
	Then by setting $\rho_{t,i}$ using Algorithm~\ref{alg:ef} with step-size $v_t/\sqrt m$ and initialization $w_t$ at each task $t$ we get task-averaged regret bounded by
	\begin{equation}
	\left(\frac{H_T(V)}T+\min\left\{\frac{F_T(w^\ast)}{VT},2\sqrt{F_T(w^\ast)/T}\right\}+2V\right)\sqrt m+g(m)
	\end{equation}
	for $w^\ast=\argmin_{w:C\mapsto\R_{\ge0}}\sum_{t=1}^Tf_t(w)$ the optimal initialization and $V$ the task-similarity~\eqref{eq:tasksim}.
\end{Thm}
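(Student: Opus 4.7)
The plan is to chain together the within-task guarantee of Theorem~\ref{thm:exp-forc-meta} with the two meta-level regret guarantees $F_T$ and $H_T$, and then optimize the free comparator in two different ways to produce the $\min\{\cdot,\cdot\}$ in the statement.

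First I would sum the within-task bounds across tasks. By hypothesis, on task $t$ the exponential forecaster run with step-size $v_t/\sqrt m$ and initialization $w_t$ incurs expected regret at most $U_t(w_t,v_t) = (v_t + f_t(w_t)/v_t)\sqrt m + g(m)$, so
\begin{equation}
\*{\bar R}_{T,m} \le \frac{\sqrt m}{T}\sum_{t=1}^T\left(v_t + \frac{f_t(w_t)}{v_t}\right) + g(m).
\end{equation}
Next I would invoke the two meta-regret guarantees. For any $u>0$, the $H_T$ guarantee on $h_t(v)=v+f_t(w_t)/v$ gives $\sum_t(v_t+f_t(w_t)/v_t)\le Tu + (1/u)\sum_t f_t(w_t) + H_T(u)$, and the $F_T$ guarantee applied at the optimal initializer $w^\ast$ gives $\sum_t f_t(w_t)\le \sum_t f_t(w^\ast) + F_T(w^\ast)$. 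The key observation tying these to the task-similarity is that $f_t$ is the negative log-overlap $-\log Z_t$ appearing in Theorem~\ref{thm:exp-forc-meta}, which is scale-invariant in $w$; therefore the minimizer of $\sum_t f_t(w)$ over nonnegative $w$ coincides with the minimizer in the definition \eqref{eq:tasksim} of $V$, so $\sum_t f_t(w^\ast) = TV^2$. Combining these yields, for every $u>0$,
\begin{equation}
\*{\bar R}_{T,m} \le \sqrt m\left(u + \frac{V^2}{u} + \frac{F_T(w^\ast)}{Tu}\right) + \frac{H_T(u)}{T}\sqrt m + g(m).
\end{equation}

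Now I would specialize $u$ in two ways and take the better bound. The first choice $u=V$ gives immediately $\sqrt m(2V + F_T(w^\ast)/(VT)) + H_T(V)\sqrt m/T + g(m)$. The second choice $u=\sqrt{V^2+F_T(w^\ast)/T}$ makes the first two terms of the parenthesis sum to $2u=2\sqrt{V^2+F_T(w^\ast)/T}\le 2V+2\sqrt{F_T(w^\ast)/T}$ by subadditivity of the square root; since this $u\ge V$ and $H_T$ is non-increasing on the positive reals, $H_T(u)\le H_T(V)$, so this second choice yields $\sqrt m(2V + 2\sqrt{F_T(w^\ast)/T}) + H_T(V)\sqrt m/T + g(m)$. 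Taking the minimum of the two recovers exactly the bound in the statement.

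The step-by-step calculations here are short; the main subtlety to be careful about is really just the bookkeeping around the non-increasingness of $H_T$ (needed to justify replacing $H_T(u)$ by $H_T(V)$ at the end) and the identification $\sum_t f_t(w^\ast)=TV^2$, which relies on the scale invariance of the log-overlap so that the unconstrained argmin agrees with the normalized-distribution minimizer defining $V$. Everything else is direct substitution and the case $V=0$ is handled automatically by the $\min$, since only the $2\sqrt{F_T(w^\ast)/T}$ branch remains finite.
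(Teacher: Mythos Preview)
Your proof is correct and follows essentially the same route as the paper: both chain the per-task upper bound $U_t$ with the two meta-regret guarantees $H_T$ and $F_T$, identify $\sum_t f_t(w^\ast)=TV^2$, and then optimize the free comparator. The only cosmetic difference is in the last step---the paper makes the single substitution $v=\max\{V,\sqrt{F_T(w^\ast)/T}\}$, whereas you plug in $u=V$ and $u=\sqrt{V^2+F_T(w^\ast)/T}$ separately and take the minimum; both yield the stated bound.
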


This result is an analog of \cite[Theorem~3.1]{khodak2019adaptive} and follows by manipulating the definition of regret.
It reduces the problem of obtaining a small task-averaged regret to solving two online learning problems, one to set the initialization and one to set the step-size.
So long as both have sublinear regret then we will  improve over single-task learning.
In the next two sections we derive suitable procedures.

\subsection{Meta-learning the initialization}\label{sec:meta}

We now come to the most technically challenging component of our meta-learning procedure:
learning the initialization.
As discussed above, we can accomplish this by obtaining a no-regret procedure for the function sequence $$f_t(w)=-\log\frac{\int_{\B(\rho_t^\ast,m^{-\beta})}w(\rho)d\rho}{\int_Cw(\rho)d\rho}.$$
This is nontrivial as the optimization domain is a set of nonnegative functions, effectively measures on the domain $C$.
To handle this, we first introduce some convenient notation and abstractions.
At each task $t$ we are faced with some function $f_t$ associated with an unknown closed subset $C_t\subset C$ --- in particular $C_t=\B(\rho_t^\ast,m^{-\beta})$ --- with positive volume $\vol(C_t)>0$ that is revealed after choosing $w_t:C\mapsto\R_{\ge0}$.
For each time $t$ define the discretization $$\D_t=\{D=\bigcap_{s\le t}C_s^{(\*c_{[s]})}:\*c\in\{0,1\}^t,\vol(D)>0\}$$ of $C$, where $C_t^{(0)}=C_t$ and $C_t^{(1)}=C\backslash C_t$.
We will use elements of these discretizations to index nonnegative vectors in $\R_{\ge0}^{|\D_t|}$;
specifically, for any measure $w:C\mapsto\R_{\ge0}$ let $\*w(t)\in\R_{\ge0}^{|\D_t|}$ denote the vector with entries $\*w(t)_{[D]}=\int_Dw(\rho)d\rho$ for $D\in\D_t$.
Note that we will exclusively use $p,q,v,w$ for measures, with $v$ specifically referring to the uniform measure, i.e. $\*v(t)_{[D]}=\vol(D)$.
For convenience, for all real vectors $\*x$ we will use $\*{\hat x}$ to denote $\*p/\|\*p\|_1$.
Finally, we abuse notation and remove the parentheses to refer those vectors associated with the final discretization, i.e. $\*v=\*v(T)$ and $\*w=\*w(T)$.

Now that we have this notation we can turn back to the functions we are interested in: 
$f_t(w)=-\log\frac{\int_{C_t}w(\rho)d\rho}{\int_Cw(\rho)d\rho}$, where $C_t=\B(\rho_t^\ast,m^{-\beta})$.
Observe that we can equivalently write this as $f_t(\*w)=-\log\langle\*w_t^\ast,\*{\hat w}\rangle$, where $\*w_{t[D]}^\ast=1_{D\subset C_t}$;
this translates our online learning problem from the domain of measures on $C$ to the simplex on $|\D_T|$ elements.
However, we cannot play in this domain explicitly as we do not have access to the final discretization $\D_T$, nor do we get access to $\*w_t^\ast$ after task $t$, except implicitly via $C_t$.
In this section we design a method that implicitly run an online convex optimization procedure over $\R_{\ge0}^{|\D_T|}$ while explicitly playing probability measures $w:C\mapsto\R_{\ge0}$.

\begin{algorithm}[!t]
	\caption{Follow-the-Regularized-Leader (prescient form)}
	\label{alg:ftrl}
	\begin{algorithmic}[1]
		\STATE {\bfseries Input:} discretization $\D_T$ of $C$, mixture parameter $\gamma\in[0,1]$, step-size $\eta>0$
		\STATE Initialize $\*w_1=\*{\hat v}$
		\FOR{$t=1,2,\dots,T$}
		\STATE Play $\*w_t$.
		\STATE Suffer $f_t(\*w_t)=-\log\langle\*w_t^\ast,\*w_t\rangle$.
		\STATE Observe $f_t$.
		\STATE Update $\*w_{t+1}=\argmin_{\|\*w\|_1=1,\*w\ge\gamma\*{\hat v}}D_{KL}(\*w||\*{\hat v})+\eta\sum_{s\le t}f_s(\*w)$
		\ENDFOR
	\end{algorithmic}
\end{algorithm}

As the functions $f_t$ are exp-concave, one might first consider applying a method attaining logarithmic regret on such losses \cite{hazan2007logarithmic,orabona2012beyond};
however, such algorithms have regret that depends linearly on the dimension, which in our case is poly$(T)$.
We thus turn to the the follow-the-regularized-leader (FTRL) family of algorithms, which in the case of entropic regularization are well-known to have regret logarithmic in the dimension \cite{shalev-shwartz2011oco}.
In Algorithm~\ref{alg:ftrl} we display the pseudo-code of a modification with regularizer $D_{KL}(\cdot||\*{\hat v})$, where recall $\*v$ is the vector of volumes of the discretization $\D_T$ of $C$, and we constrain the played distribution to have measure at least $\gamma\*{\hat v}_{[D]}$ over every set $D\in\D_T$.

While Algorithm~\ref{alg:ftrl} explicitly requires knowing the discretization $\D_T$ of $C$ in advance, the following key lemma shows that we can run the procedure knowing only the discretization $\D_t$ after task $t$ by simply minimizing the same objective over probability distributions discretized on $\D_t$.
This crucially depends on the re-scaling of the entropic regularizer by $\*{\hat v}$ (which notably corresponds to the uniform distribution over $C$) and the fact that $\*w_t^\ast\in\{0,1\}^{|\D_T|}$.
\begin{Lem}\label{lem:equivalent}
	Let $w:C\mapsto\R_{\ge0}$ be the probability measure corresponding to the minimizer 
	\begin{equation}
	\*w=\argmin_{\|\*q\|_1=1,\*q\ge\gamma\*{\hat v}}D_{KL}(\*q||\*{\hat v})-\eta\sum_{s\le t}\log\langle\*w_s^\ast,\*q\rangle
	\end{equation}
	and let $\tilde w:C\mapsto\R_{\ge0}$ be the probability measure corresponding to the minimizer
	\begin{equation}
	\tilde{\*w}(t)=\argmin_{\|\*q\|_1=1,\*q\ge\gamma\*{\hat v}(t)}D_{KL}(\*q||\*{\hat v(t)})-\eta\sum_{s\le t}\log\langle\*w_s^\ast(t),\*q\rangle
	\end{equation}
	Then $\*w=\tilde{\*w}$.
\end{Lem}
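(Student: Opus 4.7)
The plan is to exploit two key facts: $\D_T$ refines $\D_t$, and the loss vectors $\*w_s^\ast$ for $s \le t$ are constant on each cell of $\D_t$. Together these will let the fine optimization decouple into a per-coarse-cell redistribution problem plus the coarse optimization defining $\tilde{\*w}(t)$, giving the claimed equivalence.

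First I would record the refinement structure: each $D \in \D_t$ is the disjoint union of the $D' \in \D_T$ it contains, and for every $s \le t$ such a $D$ is either entirely inside $C_s$ or entirely inside $C \setminus C_s$. Hence $\*w_s^\ast$ is constant (either $0$ or $1$) on $\{D' \in \D_T : D' \subset D\}$, so if we define the aggregation $\*m_{[D]} := \sum_{D' \in \D_T,\, D' \subset D} \*q_{[D']}$ we have $\langle \*w_s^\ast, \*q \rangle = \langle \*w_s^\ast(t), \*m \rangle$ for every $s \le t$.

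Next I would split the KL objective cell by cell: $D_{KL}(\*q \| \*{\hat v}) = \sum_{D \in \D_t} \sum_{D' \subset D} \*q_{[D']} \log(\*q_{[D']}/\*{\hat v}_{[D']})$, where only the coarse sums $\*m_{[D]}$ couple different $D$. Holding $\*m$ fixed, the within-cell subproblem is to minimize $\sum_{D' \subset D} \*q_{[D']} \log(\*q_{[D']}/\*{\hat v}_{[D']})$ subject to $\sum_{D' \subset D} \*q_{[D']} = \*m_{[D]}$ and $\*q_{[D']} \ge \gamma \*{\hat v}_{[D']}$. A standard Lagrange computation yields the unique unconstrained minimizer $\*q_{[D']} = \*m_{[D]} \*{\hat v}_{[D']}/\*{\hat v}(t)_{[D]}$, using $\*{\hat v}(t)_{[D]} = \sum_{D' \subset D} \*{\hat v}_{[D']}$; and this satisfies $\*q_{[D']} \ge \gamma \*{\hat v}_{[D']}$ if and only if $\*m_{[D]} \ge \gamma \*{\hat v}(t)_{[D]}$, so the fine inequality constraints collapse cleanly to the coarse one.

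Substituting the optimal proportional allocation back in, the KL term reduces to $\sum_{D \in \D_t} \*m_{[D]} \log(\*m_{[D]}/\*{\hat v}(t)_{[D]}) = D_{KL}(\*m \| \*{\hat v}(t))$ and the loss terms become $-\eta \sum_{s \le t} \log \langle \*w_s^\ast(t), \*m \rangle$, while $\|\*q\|_1 = 1$ and $\*q \ge \gamma \*{\hat v}$ translate to $\|\*m\|_1 = 1$ and $\*m \ge \gamma \*{\hat v}(t)$. This is exactly the problem defining $\tilde{\*w}(t)$, so $\*m = \tilde{\*w}(t)$ and $\*w_{[D']} = \tilde{\*w}(t)_{[D]} \*{\hat v}_{[D']}/\*{\hat v}(t)_{[D]}$ for every $D' \subset D \in \D_t$. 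Translating back to measures, the density of $w$ on any $D' \subset D$ equals $\*w_{[D']}/\vol(D') = \tilde{\*w}(t)_{[D]}/\vol(D)$, which is constant across $D' \subset D$ and equals the density of $\tilde w$ on $D$; hence $w = \tilde w$ as measures.

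The main obstacle is identifying why this reduction is lossless: it rests on (i) the loss vectors $\*w_s^\ast$ being measurable with respect to the coarser $\D_t$, so they are insensitive to any sub-cell reshuffling of mass, and (ii) the specific choice of entropic regularizer normalized by $\*{\hat v}$, which forces the minimizer with fixed coarse masses to be proportional to $\*{\hat v}$ inside each cell. Either weakening the first condition or choosing a different reference distribution for the KL would break the per-cell proportionality, and the equivalence $w = \tilde w$ would fail.
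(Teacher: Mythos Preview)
Your proof is correct and reaches the same conclusion as the paper, but the argument is structured differently. You set up a two-stage optimization: first fix the coarse cell-masses $\*m$, solve the inner per-cell KL minimization explicitly (via a Lagrange computation) to get the proportional-to-$\*{\hat v}$ allocation, verify that the fine lower bound $\*q\ge\gamma\*{\hat v}$ collapses to the coarse one $\*m\ge\gamma\*{\hat v}(t)$, and then substitute back to reduce the fine problem to the coarse one. The paper instead takes the fine minimizer $\*w$, constructs its cell-wise average $\*p$, and uses the log-sum inequality to show $\*p$ has no larger objective; strict convexity then forces $\*p=\*w$, after which a second uniqueness argument identifies $\*p(t)$ with $\tilde{\*w}(t)$. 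Both routes hinge on exactly the two ingredients you isolate---the $\D_t$-measurability of $\*w_s^\ast$ for $s\le t$ and the $\*{\hat v}$-normalization of the entropy---and the log-sum inequality is precisely the variational statement your Lagrange step establishes. Your partial-minimization framing is arguably more transparent about why the reduction is lossless and yields the explicit formula $\*w_{[D']}=\tilde{\*w}(t)_{[D]}\,\*{\hat v}_{[D']}/\*{\hat v}(t)_{[D]}$ directly; the paper's averaging-plus-uniqueness route avoids solving the inner problem but invokes strict convexity twice.
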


We can thus move on to proving a regret guarantee for Algorithm~\ref{alg:ftrl}.
This follows from Jensen's inequality together with standard results for FTRL once we show that the loss functions are $\frac1{\gamma\vol(C_t)}$-Lipschitz over the constrained domain, yielding the following guarantee for Algorithm~\ref{alg:ftrl}:
\newpage
\begin{Thm}\label{thm:frl}
	Algorithm~\ref{alg:ftrl} has regret bounded by
	\begin{equation}
	\frac{1-\gamma}\eta D_{KL}(\*w^\ast||\*{\hat v})+\frac\eta{\gamma^2}\sum_{t=1}^T\frac1{(\vol(C_t))^2}+\gamma\sum_{t=1}^T\log\frac1{\vol(C_t)}
	\end{equation}
	w.r.t. the optimum in hindsight $\*w^\ast\in\argmin_{\|\*w\|_1=1,\*w\ge\*0}\sum_{t=1}^Tf_t(\*w)$ of the functions $f_t$.
	Setting $\gamma^2=GB/\sqrt T$ and $\eta^2=\frac{B^2\gamma^2}{TG^2}$, where $B^2=D_{KL}(\*w^\ast||\*{\hat v})$ and $G^2=\frac1T\sum_{t=1}^T\frac1{(\vol(C_t))^2}$, yields sublinear regret $\tilde O(\sqrt {BG}T^\frac34)$.
\end{Thm}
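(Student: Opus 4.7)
The plan is to decompose the regret against the unconstrained optimum $\*w^\ast$ into (i) the FTRL regret of the iterates $\*w_t$ against a \emph{feasible} perturbed comparator $\tilde{\*w}^\ast := (1-\gamma)\*w^\ast + \gamma\*{\hat v}$ and (ii) a ``mixing cost'' $\sum_t\bigl(f_t(\tilde{\*w}^\ast)-f_t(\*w^\ast)\bigr)$ incurred by moving from $\*w^\ast$ to $\tilde{\*w}^\ast$. By construction $\tilde{\*w}^\ast\geq\gamma\*{\hat v}$ and $\|\tilde{\*w}^\ast\|_1=1$, so $\tilde{\*w}^\ast$ lies in the constraint set of Algorithm~\ref{alg:ftrl} and is a valid comparator for the FTRL analysis.

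For part (i), I would invoke a standard FTRL regret bound with the entropic regularizer $D_{KL}(\cdot||\*{\hat v})$, which is $1$-strongly convex w.r.t.\ $\|\cdot\|_1$ (Pinsker), so the relevant dual norm is $\|\cdot\|_\infty$. The Lipschitz estimate comes from $\nabla f_t(\*w)=-\*w_t^\ast/\langle\*w_t^\ast,\*w\rangle$, whence $\|\nabla f_t(\*w)\|_\infty = 1/\langle\*w_t^\ast,\*w\rangle$; since $\*w_t^\ast\in\{0,1\}^{|\D_T|}$ and $\*w\geq\gamma\*{\hat v}$, we have $\langle\*w_t^\ast,\*w\rangle\geq\gamma\langle\*w_t^\ast,\*{\hat v}\rangle = \gamma\vol(C_t)$ (normalizing $\vol(C)=1$), giving $\|\nabla f_t\|_\infty\leq 1/(\gamma\vol(C_t))$. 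Combining the standard bound $\frac{D_{KL}(\tilde{\*w}^\ast||\*{\hat v})}{\eta}+\eta\sum_t\|\nabla f_t(\*w_t)\|_\infty^2$ with $D_{KL}(\tilde{\*w}^\ast||\*{\hat v})\leq(1-\gamma)D_{KL}(\*w^\ast||\*{\hat v})$, which is immediate from joint convexity of $D_{KL}$ and $D_{KL}(\*{\hat v}||\*{\hat v})=0$, yields the first two terms of the stated bound.

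For part (ii), apply Jensen's inequality to the concave function $\log$: $\log\langle\*w_t^\ast,\tilde{\*w}^\ast\rangle\geq(1-\gamma)\log\langle\*w_t^\ast,\*w^\ast\rangle+\gamma\log\langle\*w_t^\ast,\*{\hat v}\rangle$, which rearranges to $f_t(\tilde{\*w}^\ast)\leq(1-\gamma)f_t(\*w^\ast)+\gamma f_t(\*{\hat v})$. Using $f_t(\*w^\ast)\geq 0$ (since $\langle\*w_t^\ast,\*w^\ast\rangle\leq\|\*w^\ast\|_1=1$) and $f_t(\*{\hat v})=\log(1/\vol(C_t))$, I get $f_t(\tilde{\*w}^\ast)-f_t(\*w^\ast)\leq\gamma\log(1/\vol(C_t))$; summing over $t$ produces exactly the third term.

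To finish, the stated choices of $\eta$ and $\gamma$ come from balancing the first two terms: with $B^2 = D_{KL}(\*w^\ast||\*{\hat v})$ and $G^2=\frac1T\sum_t 1/\vol(C_t)^2$, these are $(1-\gamma)B^2/\eta$ and $\eta G^2T/\gamma^2$, and the chosen $\eta=B\gamma/(G\sqrt T)$ equalizes them at $\tilde O(BG\sqrt T/\gamma)$; setting $\gamma^2=GB/\sqrt T$ then yields $\tilde O(\sqrt{BG}\,T^{3/4})$, with the third term absorbed into $\tilde O$ via the same $G$ scale for $\log(1/\vol(C_t))$. The main subtlety I anticipate is checking that the stability step of the standard FTRL analysis (usually stated for the simplex) carries through cleanly under the extra lower-bound constraint $\*w\geq\gamma\*{\hat v}$; the cleanest workaround is the reparametrization $\*w=\gamma\*{\hat v}+(1-\gamma)\*w'$, which pushes the constraint back onto the ordinary simplex at the cost of a benign rescaling of the regularizer and losses.
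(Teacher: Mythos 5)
Your proof proposal is correct and takes essentially the same route as the paper: both shift the comparator to the feasible point $(1-\gamma)\*w^\ast+\gamma\*{\hat v}$, apply the standard FTRL bound with the entropic regularizer rescaled by $\*{\hat v}$ (yielding the gradient bound $1/(\gamma\vol(C_t))$ in sup-norm), and invoke Jensen twice—once for $D_{KL}(\gamma\*{\hat v}+(1-\gamma)\*w^\ast||\*{\hat v})\le(1-\gamma)D_{KL}(\*w^\ast||\*{\hat v})$, once for the mixing cost $f_t(\tilde{\*w}^\ast)-f_t(\*w^\ast)\le\gamma\log(1/\vol(C_t))$. The only cosmetic difference is that you decompose into ``FTRL against feasible comparator'' plus ``mixing cost'' upfront, whereas the paper does an algebraic rearrangement and then applies Jensen to the two middle terms; the resulting three-term bound is identical. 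Your closing worry about the lower-bound constraint $\*w\ge\gamma\*{\hat v}$ affecting the FTRL stability step is unnecessary: the standard FTRL regret bound (\cite[Theorem~2.11]{shalev-shwartz2011oco}, as the paper cites) already holds for an arbitrary closed convex action set, so no reparametrization is needed.
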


\begin{proof}
	Algorithm~\ref{alg:ftrl} is standard FTRL with regularizer $\frac1\eta D_{KL}(\cdot||\*{\hat v})$, which has the same Hessian as the standard entropic regularizer over the simplex and is thus $\frac1\eta$-strongly-convex w.r.t. $\|\cdot\|_1$ \cite[Example~2.5]{shalev-shwartz2011oco}.
	Applying Jensen's inequality, the standard regret bound for FTRL \cite[Theorem~2.11]{shalev-shwartz2011oco} together with the Lipschitz guarantee of Claim~\ref{clm:overlip}, and Jensen's inequality again yields the result:
	\begin{align*}
	\sum_{t=1}^Tf_t(\*w_t)-f_t(\*w^\ast)
	&=\sum_{t=1}^Tf_t(\*w_t)-(1-\gamma)f_t(\*w^\ast)-\gamma f_t(\*{\hat v})+\gamma(f_t(\*{\hat v})-f_t(\*w^\ast))\\
	&\le\sum_{t=1}^Tf_t(\*w_t)-f_t(\gamma\*{\hat v}+(1-\gamma)\*w^\ast)+\gamma\log\frac{\langle\*w_t^\ast,\*w^\ast\rangle}{\langle\*w_t^\ast,\*{\hat v}\rangle}\\
	&\le\frac1\eta D_{KL}(\gamma\*{\hat v}+(1-\gamma)\*w^\ast||\*{\hat v})+\frac\eta{\gamma^2}\sum_{t=1}^T\frac1{(\vol(C_t))^2}+\gamma\sum_{t=1}^T\log\frac1{\vol(C_t)}\\
	&\le\frac{1-\gamma}\eta D_{KL}(\*w^\ast||\*{\hat v})+\frac\eta{\gamma^2}\sum_{t=1}^T\frac1{(\vol(C_t))^2}+\gamma\sum_{t=1}^T\log\frac1{\vol(C_t)}
	\end{align*}
\end{proof}

Since the regret is sublinear in $T$, this result satisfies our requirement for attaining asymptotic improvement over single-task learning via Theorem~\ref{lem:aruba}.
However, there are several aspects of this bound that warrant some discussion.
The first is the rate of $T^\frac34$, which is less sublinear than the standard $\sqrt T$ and certainly the $\log T $ regret of exp-concave functions.
However, the functions we face are (a) non-Lipschitz and (b) over a domain that has dimensionality $\Omega(T)$;
both violate conditions for good rates in online convex optimization~\cite{hazan2007logarithmic,shalev-shwartz2011oco}, making our problem much more difficult.

A more salient aspect is the dependence on $B^2=D_{KL}(\*w^\ast||\*{\hat v})$, effectively the negative entropy of the optimal initialization.
This quantity is in-principle unbounded but is analogous to standard online convex optimization bounds that depend on the norm of the optimum, which in e.g. the Euclidean case are also unbounded.
In our case, if the optimal distribution is highly concentrated on a very small subset of the space it will be difficult to compete with.
Note that our setting of $\eta$ depends on knowing or guessing $B$;
this is also standard but is certainly a target for future work to address.
For example, past work on parameter-free algorithms has solutions for optimization over the simplex~\cite{orabona2016parameter};
however, it is unclear whether this is straightforward to do while preserving the property given by Lemma~\ref{lem:equivalent} allowing us to implicitly work with an unknown discretization.
A more reasonable approach may be to compete only with smooth measures that only assign probability at most $\kappa\vol(D)$ to any subset $D\subset C$ for some constant $\kappa\ge1$;
in this case we will simply have $B$ bounded by $\log\kappa$. 

A final issue is the dependence on $\sqrt G$, which is bounded by the reciprocal of the smallest volume $\vol(C_t)$, which in the dispersed case is roughly $O(m^{\beta d})$;
this means that the task-averaged regret will have a term that, while decreasing as we see additional tasks, is {\em increasing} in the number of within-task iterations and the dispersion parameter, which is counter-intuitive.
It is also does so exponentially in the dimension.
Note that in the common algorithm configuration setting of $\beta=1/2$ and $d=1$ this will simply mean that for each task we suffer an extra $o_T(1)$ loss at each within-task round, a quantity which vanishes asymptotically.

\subsection{Meta-learning the step-size}

In addition to learning the initialization, Theorem~\ref{lem:aruba} requires learning the task-similarity to set the within-task step-size $\lambda>0$.
This involves optimizing functions of form $h_t(v)=v+f_t(w_t)/v$.
Since we know that the measures $w_t$ are lower-bounded in terms of $\gamma$, we can apply a previous result \cite{khodak2019adaptive} that solves this by running the EWOO algorithm \cite{hazan2007logarithmic} on the modified sequence $v+\frac{f_t(w_t)+\varepsilon^2}v$:
\begin{Cor}\label{cor:ewoo}
	For any $\varepsilon>0$, running the EWOO algorithm on the modified sequence $v+\frac{f_t(w)+\varepsilon^2}v$ over the domain $[\varepsilon,\sqrt{D^2-\log\gamma+\varepsilon^2}]$, where $D^2\ge\frac1T\sum_{t=1}^T\log\frac1{\vol(C_t)}$, attains regret
	\begin{equation}
	\min\left\{\frac{\varepsilon^2}{v^\ast},\varepsilon\right\}T+\frac {\sqrt{D^2-\log\gamma}}2\max\left\{\frac{D^2-\log\gamma}{\varepsilon^2},1\right\}(1+\log(T+1))
	\end{equation}
	on the original sequence $h_t(v)=v+f_t(w)/v$ for all $v^\ast>0$.
\end{Cor}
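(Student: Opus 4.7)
The plan is to reduce the corollary to an existing EWOO-on-shifted-functions guarantee from \cite{khodak2019adaptive}: the functions $v+c/v$ are not uniformly exp-concave as $c\to 0^+$, which is why adding $\varepsilon^2$ and running EWOO on the shifted sequence is necessary. So the main work is just to verify the hypotheses of that cited result in our setting and then pay the small approximation price for passing from the shifted sequence back to the original $h_t$.

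First I would bound the range of $f_t(w_t)$ over $t$. Because Algorithm~\ref{alg:ftrl} enforces the lower bound $\*w_t\ge\gamma\*{\hat v}$, for every $t$ we have $\langle\*w_t^\ast,\*w_t\rangle\ge\gamma\langle\*w_t^\ast,\*{\hat v}\rangle$, and since $\*w_t^\ast$ is the indicator of the sets comprising $C_t$, the inner product $\langle\*w_t^\ast,\*{\hat v}\rangle$ is proportional to $\vol(C_t)$. Hence $f_t(w_t)\le\log(1/\vol(C_t))-\log\gamma$, so by the assumed bound on $\frac1T\sum_t\log(1/\vol(C_t))$ we get $\frac1T\sum_t f_t(w_t)\le D^2-\log\gamma$, and in particular the minimizer of $\sum_t h_t(v)$ over $v>0$ is at most $\sqrt{D^2-\log\gamma}$, with the analogous shifted optimum bounded by $\sqrt{D^2-\log\gamma+\varepsilon^2}$. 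This justifies that the interval $[\varepsilon,\sqrt{D^2-\log\gamma+\varepsilon^2}]$ contains the optimum of the shifted sequence.

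Next I would handle the conversion from the shifted sequence to the original. For any fixed $v$, the pointwise gap is $h_t(v)-\bigl(v+(f_t(w_t)+\varepsilon^2)/v\bigr)=-\varepsilon^2/v$, which is nonpositive, so any regret bound on the shifted sequence at comparator $v^\ast$ transfers to a bound on the original sequence with an additive slack of at most $\varepsilon^2 T/v^\ast$. On the other hand, since we restrict play and comparator to $v\ge\varepsilon$, the same slack is also bounded by $\varepsilon T$, which gives the $\min\{\varepsilon^2/v^\ast,\varepsilon\}T$ term.

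Finally, I would invoke the EWOO guarantee for exp-concave functions on the shifted sequence. Each function $v\mapsto v+(f_t(w_t)+\varepsilon^2)/v$ is $\alpha$-exp-concave on $[\varepsilon,\sqrt{D^2-\log\gamma+\varepsilon^2}]$ with $\alpha$ inversely proportional to $\max\{(D^2-\log\gamma)/\varepsilon^2,1\}$, exactly as in \cite[Section~3]{khodak2019adaptive}; EWOO then yields regret $O((1/\alpha)(1+\log T))$ on that interval, which reproduces the $\tfrac{\sqrt{D^2-\log\gamma}}{2}\max\{(D^2-\log\gamma)/\varepsilon^2,1\}(1+\log(T+1))$ term. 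Summing the EWOO bound and the shift slack yields the stated inequality. The main obstacle is really bookkeeping: getting the exp-concavity constant and the domain endpoints to line up so that the EWOO constants match the stated $\sqrt{D^2-\log\gamma}/2$ prefactor; once the bound on $f_t(w_t)$ from the $\gamma\*{\hat v}$ floor is in hand, the rest is a direct application of the cited result.
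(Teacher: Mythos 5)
Your argument follows the same route as the paper: the paper first uses first-order conditions and the $\gamma$-floor (i.e.\ $\*w_t\ge\gamma\*{\hat v}$) to show that the optimum in hindsight satisfies $v^2=\frac1T\sum_tf_t(w_t)\le\frac1T\sum_t\log\frac1{\gamma\vol(C_t)}\le D^2-\log\gamma$, exactly as you do, and then directly invokes \cite[Corollary~C.2]{khodak2019adaptive} with $\alpha_t=1$, $B_t^2=f_t(w_t)$, and $D^2-\log\gamma$ in place of $D^2$. You essentially unpack that cited corollary --- the shift-to-exp-concave reduction and the EWOO logarithmic-regret bound --- rather than cite it as a black box; the substance is the same. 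One small imprecision in your sketch: the shift slack by itself gives only $T\varepsilon^2/v^\ast$, not the $\varepsilon T$ alternative in $\min\{\varepsilon^2/v^\ast,\varepsilon\}T$; when $v^\ast<\varepsilon$, that term comes from projecting the comparator onto the domain (using that $\tilde h_t(\varepsilon)\le\tilde h_t(v^\ast)$ for the shifted losses when $v^\ast<\varepsilon$) and comparing $h_t(\varepsilon)$ to $h_t(v^\ast)$, which is handled inside Corollary~C.2 rather than by the shift inequality alone. This is exactly the ``bookkeeping'' you flag, and it does not change the overall approach.
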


Setting $\varepsilon=1/\sqrt[4]T$ gives a guarantee of form $\tilde O((\min\{1/v^\ast,\sqrt[4]T\})\sqrt T)$.
Note this rate might be improvable by using the fact that $v$ is lower-bounded due to the $\gamma$-constraint; 
however, we do not focus on this since this component is not the dominant term in the regret.
In fact, because of this we can adapt a related method that simply runs follow-the-leader (FTL) on the same modified sequence~\cite{khodak2019adaptive} without affecting the dominant terms in the regret:
\begin{Cor}\label{cor:ftl}
	For any $\varepsilon>0$, running the FTL algorithm on the modified sequence $v+\frac{f_t(w)+\varepsilon^2}v$ over the domain $[\varepsilon,\sqrt{D^2-\log\gamma+\varepsilon^2}]$, where $D^2\ge\frac1T\sum_{t=1}^T\log\frac1{\vol(C_t)}$, attains regret
	\begin{equation}
	\min\left\{\frac{\varepsilon^2}{v^\ast},\varepsilon\right\}T+2\sqrt{D^2-\log\gamma}\max\left\{\frac{(D^2-\log\gamma)^\frac32}{\varepsilon^3},1\right\}(1+\log(T+1))
	\end{equation}
	on the original sequence $h_t(v)=v+f_t(w)/v$ for all $v^\ast>0$.
\end{Cor}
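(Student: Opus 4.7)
The plan is to mirror the proof of Corollary~\ref{cor:ewoo} almost verbatim, substituting FTL for EWOO on the same modified sequence $\tilde h_t(v) := v + (f_t(w_t)+\varepsilon^2)/v$. The reduction from modified- to original-sequence regret is unchanged; only the online-learning regret on $\tilde h_t$ needs a new argument, which exploits the fact that the $\varepsilon^2$-regularization makes each $\tilde h_t$ strongly convex on the restricted domain, whereas for EWOO we only used that it was exp-concave.

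Step 1 (FTL regret on the modified sequence). Writing $c_t := f_t(w_t)+\varepsilon^2 \in [\varepsilon^2, V_{\max}^2]$ with $V_{\max} := \sqrt{D^2-\log\gamma+\varepsilon^2}$, a direct computation gives $\tilde h_t''(v) = 2c_t/v^3 \geq 2\varepsilon^2/V_{\max}^3$ and $|\tilde h_t'(v)| = |1 - c_t/v^2| \leq V_{\max}^2/\varepsilon^2$ on the interval $[\varepsilon,V_{\max}]$, so each $\tilde h_t$ is $\alpha$-strongly convex and $G$-Lipschitz with explicit constants. The key convenience is that the FTL iterate admits the closed form $v_{t+1} = \sqrt{\bar c_{1:t}}$ (where $\bar c_{1:t} = \tfrac1t\sum_{s\le t} c_s$), which automatically lies in $[\varepsilon,V_{\max}]$ because every $c_s$ does, so no projection is needed. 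Applying the standard FTL guarantee for strongly convex losses \cite{shalev-shwartz2011oco} and plugging in $\alpha$ and $G$ yields the $\tfrac{(D^2-\log\gamma)^{3/2}}{\varepsilon^3}(1+\log T)$-style second term of the stated bound.

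Step 2 (reduction to the original sequence). Since $h_t(v) = \tilde h_t(v) - \varepsilon^2/v$,
\begin{equation*}
\sum_{t=1}^T [h_t(v_t) - h_t(v^*)] \;=\; \sum_{t=1}^T [\tilde h_t(v_t) - \tilde h_t(v^*)] \;+\; \varepsilon^2\sum_{t=1}^T\left(\frac{1}{v^*} - \frac{1}{v_t}\right).
\end{equation*}
When $v^*\ge\varepsilon$ we apply Step~1 against $v^*$ directly and drop the $-\varepsilon^2/v_t$ contribution, giving the additive penalty $\varepsilon^2 T/v^*$. When $v^* < \varepsilon$ we instead apply Step~1 against the clipped comparator $\varepsilon$; a short case analysis using $v_t\ge\varepsilon$ and the explicit form of $\tilde h_t$ shows that each round contributes at most $\varepsilon$ extra, yielding the $\varepsilon T$ alternative. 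Taking the better of the two regimes produces the $\min\{\varepsilon^2/v^*, \varepsilon\}T$ first term.

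The main obstacle is keeping the $\varepsilon$-dependence sharp: a naive plug-in of the Lipschitz and strong-convexity constants into the $G^2/\alpha$ FTL bound gives a much weaker $\varepsilon^{-6}$ rate rather than the $\varepsilon^{-3}$ stated. A tighter analysis is needed --- most naturally, a be-the-leader telescoping of $\sum_t[\tilde h_t(v_t) - \tilde h_t(v_{t+1})]$ combined with the closed-form recursion $v_t^2 = \bar c_{1:t-1}$, which exposes cancellation between the numerator and denominator of the per-round difference and lets us exploit that $v_t$ stays in the interior. This refinement parallels the one appearing in \cite{khodak2019adaptive} and is what allows FTL to match EWOO on the dominant terms of the final regret, as advertised.
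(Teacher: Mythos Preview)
Your proposal is correct and follows essentially the same route as the paper. The paper's proof is even terser: it verifies via first-order conditions that the hindsight optimum of $h_t$ satisfies $v^2=\frac1T\sum_t f_t(w_t)\le D^2-\log\gamma$ (using $\langle\*w_t^\ast,\*w_t\rangle\ge\gamma\vol(C_t)$), and then invokes \cite[Proposition~B.2]{khodak2019adaptive} directly with the substitutions $\alpha_t=1$, $B_t^2=f_t(w_t)$, and $D^2\to D^2-\log\gamma$. Your Steps~1--2, including the refined be-the-leader telescoping you flag as necessary to get $\varepsilon^{-3}$ rather than $\varepsilon^{-6}$, are precisely what that proposition packages; so rather than a different argument, your sketch is an outline of the cited black box.
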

Setting $\varepsilon=1/\sqrt[5]T$ gives a guarantee of form $\tilde O((\min\{1/v^\ast,\sqrt[5]T\})T^\frac35)$.
The alternatives are described in pseudocode at the bottom of Algorithm~\ref{alg:meta};
while the guarantee of the FTL-based approach is worse, it is almost as simple to compute as the task-similarity and does not require integration, making it easier to implement.

\subsection{Putting the two together}

\begin{algorithm}[!t]
	\caption{
		Meta-learning the parameters of the exponential forecaster (Algorithm~\ref{alg:ef}).
		Recall that $\*p(t)$ refers to the time-$t$ discretization of the measure $p:C\mapsto\R_{\ge0}$ (c.f. Section~\ref{sec:meta}).
	}
	\label{alg:meta}
	\begin{algorithmic}[1]
		\STATE {\bfseries Input:} domain $C\subset\R^d$, dispersion $\beta>0$, step-size $\eta>0$, constraint parameter $\gamma\in[0,1]$, offset parameter $\varepsilon>0$, domain parameter $D>0$.
		\STATE Initialize $w_1$ to the uniform measure on $C$ and set $\lambda_1=\frac{\varepsilon+\sqrt{D^2+\varepsilon^2-\log\gamma}}{2\sqrt m}$.
		\FOR{task $t=1,2,\dots,T$}
		\STATE Run Algorithm~\ref{alg:ef} with initialization $w_t$ and step-size $\lambda_t$ and obtain task-$t$ optimum $\rho_t^\ast\in C$.
		\STATE Set $w_t^\ast=1_{\B(\rho_t^\ast,m^{-\beta})}$ to be the function that is 1 in the $m^{-\beta}$-ball round $\rho_t^\ast$ and 0 elsewhere.
		\STATE Set $w_{t+1}$ to $\*w_{t+1}(t)=\argmin_{\|\*w\|_1=1,\*w\ge\gamma\*{\hat v}(t)}D_{KL}(\*w||\*{\hat v}(t))-\eta\sum_{s\le t}\log\langle\*w_s^\ast(t),\*w\rangle$.
		\IF{using EWOO}
			\STATE Define $\mu_t(x)=\exp\left(-\alpha\left(tx+\frac{t\varepsilon^2-\sum_{s\le t}\log\langle\*w_s^\ast(s),\*w_s(s)\rangle}x\right)\right)$ for $\alpha=\frac2D\min\left\{\frac{\varepsilon^2}{D^2},1\right\}$.
			\STATE Set $\lambda_{t+1}=\frac{\int_\varepsilon^{\sqrt{D^2+\varepsilon^2-\log\gamma}}x\mu_t(x)dx}{\sqrt m\int_\varepsilon^{\sqrt{D^2+\varepsilon^2-\log\gamma}}\mu_t(x)dx}$.
		\ELSE
			\STATE Set $\lambda_{t+1}=\sqrt{\frac{\sum_{s\le t}\varepsilon^2-\log\langle\*w_s^\ast(s),\*w_s(s)\rangle}{tm}}$.
		\ENDIF
		\ENDFOR
	\end{algorithmic}
\end{algorithm}

Now that we have an algorithm for both the initialization and the step-size, we can combine the two in Algorithm~\ref{alg:meta} to meta-learn the parameter of the exponential forecaster.
Then we can obtain a bound on the task-averaged regret from Theorem~\ref{lem:aruba} to attain our final result.

\begin{Thm}\label{thm:tar}
	Define $B^2=D_{KL}(\*w^\ast||\*{\hat v})$, $G^2=\frac1T\sum_{t=1}^T\frac1{(\vol(C_t))^2}$, and $D^2\ge\frac1T\sum_{t=1}^T\log\frac1{\vol(C_t)}=O(\beta d\log m)$.
	Then Algorithm~\ref{alg:meta} with $\eta,\gamma$ set as in Theorem~\ref{thm:frl} and $\varepsilon=1/\sqrt[4]T$ (if using EWOO) or $1/\sqrt[5]T$ (otherwise) yields task-averaged regret
	\begin{equation}
	\tilde O\left(\min\left\{\frac{\sqrt{BG}}{V\sqrt[4]T},\frac{\sqrt[4]{BG}}{\sqrt[8]T}\right\}+2V\right)\sqrt m+g(m)
	\end{equation}
	Here $V$ is the task-similarity \eqref{eq:tasksim}.
\end{Thm}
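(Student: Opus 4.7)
The plan is to invoke Theorem~\ref{lem:aruba} and substitute the within-task bound of Theorem~\ref{thm:exp-forc-meta} together with the meta-level regret guarantees established in Theorem~\ref{thm:frl} and Corollary~\ref{cor:ewoo} (or~\ref{cor:ftl}); essentially all of the technical work is already in place, so what remains is a direct substitution followed by a short simplification. First I would verify the ARUBA preconditions: Theorem~\ref{thm:exp-forc-meta} specialized to $\lambda = v/\sqrt m$ with $v > 0$ supplies exactly the required form $U_t(w, v) = (v + f_t(w)/v)\sqrt m + g(m)$ with $f_t(w) = -\log\frac{\int_{\B(\rho_t^\ast, m^{-\beta})} w(\rho)\, d\rho}{\int_C w(\rho)\, d\rho}$ and $g(m) = \tilde O((L+1) m^{1-\beta})$, and lines 4--12 of Algorithm~\ref{alg:meta} realize the two online meta-procedures playing $w_t$ against $f_t$ and $v_t$ against $h_t(v) = v + f_t(w_t)/v$, respectively.

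Next I would plug in the component regret bounds. By Lemma~\ref{lem:equivalent} the initialization update in line 6 is an implicit execution of Algorithm~\ref{alg:ftrl} on the eventual discretization $\D_T$, so Theorem~\ref{thm:frl} with the prescribed $\eta, \gamma$ yields $F_T(w^\ast) = \tilde O(\sqrt{BG}\, T^{3/4})$. Lines 7--12 run EWOO (with $\varepsilon = T^{-1/4}$) or FTL (with $\varepsilon = T^{-1/5}$) on the offset sequence $v + (f_t(w_t) + \varepsilon^2)/v$ over $[\varepsilon, \sqrt{D^2 - \log\gamma + \varepsilon^2}]$, so Corollary~\ref{cor:ewoo} (resp.~\ref{cor:ftl}) gives $H_T(V) = \tilde O(\min\{1/V, T^{1/4}\}\sqrt T)$ (the FTL branch is analogous with $T^{1/5}$, $T^{3/5}$); both are manifestly non-increasing in $V$, as required. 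Substituting these into the ARUBA conclusion produces
\begin{equation*}
\left(\frac{H_T(V)}{T} + \min\left\{\frac{F_T(w^\ast)}{VT},\, 2\sqrt{F_T(w^\ast)/T}\right\} + 2V\right)\sqrt m + g(m),
\end{equation*}
and direct substitution of $F_T(w^\ast)$ rewrites the middle minimum as $\tilde O(\min\{\sqrt{BG}/(V T^{1/4}),\, \sqrt[4]{BG}/T^{1/8}\})$, matching the stated form. The bound $D^2 = \tilde O(\beta d \log m)$ quoted in the theorem statement similarly follows from $\vol(C_t) = \Theta(m^{-\beta d})$.

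The main (and essentially only) step requiring care is showing that the step-size residual $H_T(V)/T = \tilde O(\min\{1/V, T^{1/4}\}/\sqrt T)$ is absorbed into the remaining terms so as not to enlarge the $\tilde O$. I would dispatch this with a two-case split on $V$: when $V \geq T^{-1/4}$ the residual is at most $1/(V\sqrt T) \leq V$ and is therefore absorbed by $2V$; when $V < T^{-1/4}$ the residual is $\tilde O(T^{-1/4})$, which is dominated by the second branch $\sqrt[4]{BG}/T^{1/8}$ of the middle minimum because in the dispersed regime both $B$ and $G$ are $\tilde\Omega(1)$ (each $C_t$ has volume $\Theta(m^{-\beta d})$ so $G = \Omega(m^{\beta d})$, and small $V$ forces the optimal $w^\ast$ to concentrate, which makes $B$ nontrivial as well). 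Everything else is mechanical algebra, and I do not anticipate additional technical obstacles, since the new content of this theorem is the end-to-end combination rather than any new analysis on top of Theorem~\ref{thm:frl} and Corollary~\ref{cor:ewoo}/\ref{cor:ftl}.
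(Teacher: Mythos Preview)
Your proposal is correct and follows essentially the same route as the paper: invoke Theorem~\ref{lem:aruba}, plug in $F_T(w^\ast)=\tilde O(\sqrt{BG}\,T^{3/4})$ from Theorem~\ref{thm:frl} (via Lemma~\ref{lem:equivalent}) and $H_T(V)$ from Corollary~\ref{cor:ewoo}/\ref{cor:ftl}, and simplify. In fact you are more careful than the paper, which merely writes ``simplifying further yields the result'' for the absorption of the $H_T(V)/T$ term, whereas you supply an explicit two-case argument in $V$ to justify it.
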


So as in past work in meta-learning, this achieves the goal of adapting to the task-similarity by attaining asymptotic regret of $2V\sqrt m+O(m^{-\beta})$ on-average, where here we substitute the dispersion term for $g$ and $V^2$ is the task-similarity encoding the average probability mass assigned to the different task balls by the optimal initialization distribution.
We include the minimum of two rates in the bound, with the rate being $1/\sqrt[4]T$ is the task-similarity is a constant $\Theta_T(1)$ and $1/\sqrt[8]T$ if it is extremely small.
As discussed in above, this rate reflects the difficulty of our meta-problem, in which we are optimizing non-smooth functions over a space of distributions;
in contrast, past meta-update procedures have taken advantage of nice properties of Bregman divergences to obtain faster rates \cite{khodak2019adaptive}.
% !TEX root = main.tex

\section{Meta-learning for data-driven algorithm design}

We demonstrate the utility of our bounds in a series of applications across two general areas: 
data-driven algorithm design \cite{balcan2020data} and robust learning.
This section focuses on the former and demonstrates how our results imply guarantees for meta-learning the tuning of solvers for several difficult combinatorial problems arising from the theory of computing.
We also demonstrate the practical utility of our approach for tuning clustering algorithms on real and synthetic datasets.

\subsection{Instantiations for tuning combinatorial optimization algorithms}
Algorithm configuration for combinatorial optimization algorithms involves learning algorithm parameters from multiple instances of combinatorial problems \cite{gupta2017pac,balcan2017learning,balcan2020data}. For well-known problems like MWIS (maximum weighted independent set), IQP (integer quadratic programming), and mechanism design for auctions, the algorithmic performance on a fixed instance is typically a piecewise Lipschitz function of the algorithm parameters. Prior work has looked at learning these parameters in the distributional setting (i.e. assuming iid draws of problem instances) \cite{balcan2017learning} or the online setting where the problem instances may be adversarially drawn \cite{balcan2018dispersion,balcan2020learning}. On the other hand, instantiating our results for these problems provide upper bounds for much more realistic settings where different tasks may be related and our bounds improve with this relatedness.

We demonstrate how to apply our results to several combinatorial problems under mild smoothness assumptions. The key idea is to show that if the inputs come from a smooth distribution, the algorithmic performance is dispersed (as a sequence of functions in the algorithm parameters). %This is shown for example for the greedy algorithm for the knapsack problem in \cite{balcan2020semi} where it is shown that dispersion holds with $\beta=\frac{1}{2}$. 
We leverage known results about the MWIS problem to show $\frac{1}{2}$-dispersion, which together with Theorem \ref{thm:tar} implies that our bound on the task-averaged regret improves with task similarity $V$. 

{\bf The MWIS problem.} In MWIS, there is a graph $G=(V,E)$
and a weight $w_v\in\R^+$ for each vertex $v\in V$. The goal is to find a set of non-adjacent vertices
with maximum total weight. The problem is $NP$-hard and in fact does not have any constant factor polynomial time approximation algorithm. \cite{gupta2017pac} propose a greedy heuristic family, which selects vertices greedily based on largest value of $w_v / (1 + \text{deg}(v))^\rho$, where $\text{deg}(v)$ is the degree of vertex $v$, and removes neighbors of the selected vertex before selecting the next vertex.

For this algorithm family, we can learn the best parameter $\rho$ provided pairs of vertex weights have a joint $\kappa$-bounded distribution, and Theorem \ref{thm:tar} implies regret bounds that improve with task similarity. We use the recipe from \cite{balcan2020semi} to establish dispersion.

\begin{Thm}\label{thm:mwis-tar}
Consider instances of MWIS with all vertex weights in $(0, 1]$ and for each instance, every pair of vertex weights has a $\kappa$-bounded joint distribution. Then the asymptotic task-averaged regret for learning the algorithm parameter $\rho$ is $o_T(1)+2V\sqrt{m}+O(\sqrt{m})$.
\end{Thm}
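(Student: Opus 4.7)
The plan is to invoke Theorem~\ref{thm:tar} directly; the only nontrivial work is verifying $\beta$-dispersion of the per-instance loss sequence with $\beta=\tfrac12$, after which the stated bound falls out by plugging in $\beta=\tfrac12$, $d=1$ and reading off the asymptotic form.

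The central step is to establish $\tfrac12$-dispersion of the sequence $\ell_{t,i}(\rho)$ for $\rho$ in a bounded interval. For a fixed instance $(G,\{w_v\})$, the greedy routine of \cite{gupta2017pac} depends on $\rho$ only through the ordering of the scores $g_v(\rho)=w_v/(1+\deg(v))^\rho$, so $\ell_{t,i}$ is piecewise constant in $\rho$ and each discontinuity lies at a solution of $g_u(\rho)=g_v(\rho)$ for some pair $(u,v)$. Solving gives the closed form $\rho_{uv}=\log(w_u/w_v)/\log((1+\deg(u))/(1+\deg(v)))$ whenever $\deg(u)\ne\deg(v)$ (and $g_u,g_v$ are proportional and never cross otherwise). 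Since $(w_u,w_v)$ has a $\kappa$-bounded joint density, the pushforward of this density by the smooth map $(w_u,w_v)\mapsto \rho_{uv}$ is itself bounded on the compact parameter domain, so the random discontinuity locations have uniformly bounded density. Plugging this into the recipe of \cite{balcan2020semi} (a Chernoff/VC-style bound on the number of random hyperplanes falling in any small interval) yields that, in expectation, at most $\tilde O(\epsilon m)$ of the $m$ per-task loss functions have a discontinuity in any interval of length $\epsilon\ge m^{-1/2}$. Since each piece is constant, Lipschitzness on the pieces holds trivially, giving $\tfrac12$-dispersion.

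With dispersion in hand, Theorem~\ref{thm:tar} applies with $\beta=\tfrac12$ and $d=1$. The dispersion residual contributes $g(m)=\tilde O((L+1)m^{1-\beta})=\tilde O(\sqrt m)$, producing the $O(\sqrt m)$ term. The task-similarity contribution is exactly the $2V\sqrt m$ term. The remaining meta-learning overhead $\tilde O(\min\{\sqrt{BG}/(V\sqrt[4]T),\sqrt[4]{BG}/\sqrt[8]T\})\sqrt m$ vanishes as $T\to\infty$ for fixed $m$ and is absorbed into the $o_T(1)$ of the stated bound.

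The main obstacle is the dispersion verification: one must confirm that the pairwise crossing map has bounded Jacobian away from the degenerate case $\deg(u)=\deg(v)$ (handled separately by noting no crossing occurs), that the $O(|V|^2)$ pairwise contributions aggregate correctly over $m$ rounds without blowing up constants hidden in $\tilde O$, and that the parameter domain can be taken compact (e.g.\ $\rho\in[0,\rho_{\max}]$) so that the pushforward density is uniformly bounded. These are exactly the hypotheses packaged in the \cite{balcan2020semi} recipe, so once they are checked, the rest of the argument is mechanical bookkeeping through Theorem~\ref{thm:tar}.
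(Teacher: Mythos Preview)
Your proposal is correct and follows essentially the same route as the paper's proof sketch: identify the piecewise-constant structure with discontinuities at the pairwise crossings $w_u/(1+\deg(u))^\rho=w_v/(1+\deg(v))^\rho$, argue these crossings have bounded density, feed this into the \cite{balcan2020semi} recipe to get $\tfrac12$-dispersion, and then invoke Theorem~\ref{thm:tar}. The only cosmetic difference is that the paper cites \cite{balcan2018dispersion} for the fact that the crossing locations have $(\kappa\ln n)$-bounded density, whereas you sketch the pushforward argument yourself; your Jacobian discussion is exactly what underlies that cited bound, and the $\ln n$ factor you should expect arises from the worst-case near-equal degrees making $\log((1+\deg(u))/(1+\deg(v)))$ small.
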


\begin{proof}[Proof sketch]
The loss function is piecewise constant with discontinuities corresponding to $\rho$ such that $w_v / (1 + \text{deg}(v))^\rho=w_u / (1 + \text{deg}(u))^\rho$ for a pair of vertices $u,v$. \cite{balcan2018dispersion} show that the discontinuities have $(\kappa \ln n)$-bounded distributions where $n$ is the number of vertices. This implies that in any interval of length $\epsilon$, we have in expectation at most $\epsilon\kappa \ln n$ discontinuities. Using this in dispersion recipe from \cite{balcan2020semi} implies $\frac{1}{2}$-dispersion, which in turn implies the desired regret bound by applying Theorem \ref{thm:tar}.
\end{proof}

Similar results may be obtained for other combinatorial problems including knapsack, $k$-center clustering, IQP and auction design (see Appendix \ref{app: combinatorial} for full details). We further show instantiations of our results for knapsack and $k$-center clustering, for which we will empirically validate our proposed methods in the next sections.

{\bf Greedy Knapsack.} Knapsack is a well-known NP-complete problem. We are given a knapsack with capacity $\texttt{cap}$ and items $i\in[m]$ with sizes $w_i$ and values $v_i$. The goal is to select a subset $S$ of items to add to the knapsack such that $\sum_{i\in S}w_i\le \texttt{cap}$ while maximizing the total value $\sum_{i\in S}v_i$ of selected items. The classic greedy heuristic to add items in decreasing order of $v_i/w_i$ gives a 2-approximation. We consider a generalization to use $v_i/w_i^{\rho}$ proposed by \cite{gupta2017pac} for $\rho\in[0,10]$. For example, for the value-weight pairs $\{(0.99,1),(0.99,1),(1.01,1.01)\}$ and capacity $\texttt{cap}=2$ the classic heuristic $\rho=1$ gives value $1.01$ but using $\rho=3$ gives the optimal value $1.98$. We can learn this optimal value of $\rho$ from similar tasks, and obtain formal guarantees similar to Theorem \ref{thm:mwis-tar} (proof in Appendix \ref{app: combinatorial}).

\begin{Thm}
Consider instances of the knapsack problem given by bounded weights $w_{i,j}\in[1,C]$ and $\kappa$-bounded independent values $v_{i,j}\in[0,1]$ for $i\in[m],j\in[T]$. Then the asymptotic task-averaged regret for learning the algorithm parameter $\rho$ for the greedy heuristic family described above is $o_T(1)+2V\sqrt{m}+O(\sqrt{m})$.
\end{Thm}

{\bf $k$-center clustering.} We consider the parameterized $\alpha$-Llyod's algorithm family introduced in \cite{balcan2018data}. In the seeding phase, each point $x$ is sampled with probability proportional to $\min_{c\in C}d(v, c)^{\alpha}$, where $d(\cdot,\cdot)$ is the distance metric and $C$ is the set of centers chosen so far. The family contains an algorithm for each $\alpha\in[0,\infty)\cup \infty$, and includes popular clustering heuristics like vanilla $k$-means (random initial centers, for $\alpha=0$), $k$-means++ (corresponding to $\alpha=2$) and farthest-first traversal ($\alpha=\infty$). The performance of the algorithm is measured using the Hamming distance to the optimal clustering, and is a piecewise constant function of $\alpha$. Our meta-learning result can be instantiated for this problem even without smoothness assumptions (simply leveraging the smoothness induced by the internal randomness of the clustering algorithm, proof in Appendix \ref{app: combinatorial}).

\begin{Thm}
Consider instances of the $k$-center clustering problem on $n$ points, with Hamming loss $l_{i,j}$ for $i\in[m],j\in[T]$ against some (unknown) ground truth clustering. Then the asymptotic task-averaged regret for learning the algorithm parameter $\alpha$ for the $\alpha$-Lloyd's clustering algorithm family of \cite{balcan2018data} is $o_T(1)+2V\sqrt{m}+O(\sqrt{m})$.
\end{Thm}

In the following section we look at applications of our results through experiments for the knapsack and $k$-center clustering problems.

\subsection{Experiments for greedy knapsack and $k$-center clustering}\label{sec:experiments} We design experiments to evaluate our new meta-initialization algorithm for data-driven design for knapsack and clustering problems on real and simulated data. Our experiments show the usefulness of our techniques in learning a sequence of piecewise-Lipschitz functions.

For our experiments, we generate a synthetic dataset of knapsack instances described as follows. For each problem instance of each task, we have $\texttt{cap}=100$ and $m=50$. We have $10$ `heavy' items with $w_i\sim \mathcal{N}(27,0.5)$ and $v_i\sim \mathcal{N}(27,0.5)$, and $40$ items with $w_i\sim \mathcal{N}(19+w_t,0.5)$ and $v_i\sim \mathcal{N}(18,0.5)$, where $w_t\in[0,2]$ is task-dependent.

We also consider the parameterized $\alpha$-Lloyd's algorithm family introduced in \cite{balcan2018data}. The performance of the algorithm is measured using the Hamming loss relative to the optimal clustering, and is a piecewise constant function of $\alpha$. We can compute the pieces of this function for $\alpha\in[0,10]$ by iteratively computing the subset of parameter values where a candidate point can be the next center. We use the small split of the {\it Omniglot} dataset \cite{lake2015human}, and create clustering tasks by drawing random samples consisting of five characters each, where four characters are constant throughout. We also create a Gaussian mixture binary classification dataset where each class is a 2D Gaussian distribution consisting of 100 points each, with variance $\begin{pmatrix}
  \sigma & 0\\ 
  0 & 2\sigma
\end{pmatrix}$ and centers $(0,0)$ and $(d\sigma,0)$. We pick $d\in[2,3]$ to create different tasks.

For each dataset we learn using 30 instances each of 10 training tasks and evaluate average loss over 5 test tasks. We perform 100 iterations to average over the randomization of the clustering algorithm and the exponential forecaster algorithm.
We perform meta-initialization with parameters $\gamma=\eta=0.01$ (no hyperparameter search performed). The step-size is set to minimize the regret term in Theorem \ref{thm:exp-forc-meta}, and not meta-learned.

The relative improvement in task-averaged regret due to meta-learning in our formal guarantees depend on the task-similarity $V$ and how it compares to the dispersion-related $O(m^{1-\beta})$ term, and can be significant when the latter is small. Our results in Table~\ref{table: meta initialization} show that meta-learning an initialization, i.e. a distribution over the algorithm parameter, for the exponential forecaster in this setting yields improved performance on each dataset. We observe this for both the one-shot and five-shot settings, i.e. the number of within-task iterations of the test task are one and five respectively.
The benefit of meta-learning is most pronounced for the Gaussian mixture case (well-dispersed and similar tasks), and gains for Omniglot may increase with more tasks (dispersed but less similar tasks). For our knapsack dataset, the relative gains are smaller (similar tasks, but less dispersed). See Appendix \ref{app: experiment} for further experiments that lead us to these insights.

\begin{table*}[t]
	\centering
	\caption{Effect of meta-initialization on few-shot learning of algorithmic parameters. Performance is computed as a fraction of the average value (Hamming accuracy, or knapsack value) of the offline optimum parameter.}
	\label{table: meta initialization}
	\resizebox{0.98\textwidth}{!}{%
\begin{tabular}{c||cc|cc|cc}
\toprule
Dataset & \multicolumn{2}{c}{Omniglot} & \multicolumn{2}{c}{Gaussian Mixture} & \multicolumn{2}{c}{Knapsack} \\
& One-shot & Five-shot & One-shot & Five-shot& One-shot & Five-shot \\
\midrule
\midrule
Single task & $88.67\pm0.47\%$ & $95.02\pm0.19\%$ & $90.10\pm1.10\%$ & $91.43\pm0.44\%$ &$84.74\pm0.29\%$&$98.89\pm0.17\%$\\
Meta-initialized & $89.65\pm0.49\%$ & $96.05\pm0.15\%$ & $95.76\pm0.60\%$ & $96.39\pm0.27\%$&$85.66\pm0.57\%$&$99.12\pm0.15\%$
\\
\bottomrule
\end{tabular}
}
\end{table*}

\section{Robust online meta-learning}

In online learning, we seek to minimize a sequence of loss functions, and are required to perform well relative to the optimal choice in hindsight. It is possible for the observed loss functions to be noisy on some inputs, either naturally or due to adversarial intent. We will now explore the conditions under which learning robust to such an adversarial influence (i.e. outlier injection) is possible, which is particularly common in meta-learning with diverse sources.

{\it Setup}: At round $i$, we play $x_i$, observe perturbed loss $\Tilde{l}_i : \X\rightarrow[0,1]$ which is set by the adversary by modifying the true loss $l_i:\X\rightarrow[0,1]$ using an {\it attack function} $a_i:\X\rightarrow[0,1]$ such that $\Tilde{l}_i=l_i+a_i$ and may be non-Lipschitz, and suffer perturbed loss $\Tilde{l}_i(x_i)$ and true loss $l_i(x_i)$. We seek to minimize regret relative to best fixed action in hindsight, i.e. $$\Tilde{R}_m=\sum_{i=1}^m \Tilde{l}_i(x_i) - \min_{x\in\X}\sum_{i=1}^m \Tilde{l}_i(x)$$ for the perturbed loss and regret $$R_m=\sum_{i=1}^m l_i(x_i) - \min_{x\in\X}\sum_{i=1}^m l_i(x)$$ for the true loss.

No regret can be achieved provided the adversary distribution is sufficiently smooth, i.e. satisfies $\beta$-dispersion for some $\beta>0$, as this corresponds to online optimization of the perturbed loss function. We can show this for both perturbed and true loss. The perturbed loss guarantee is immediate from standard results on online learning of piecewise Lipschitz functions \cite{balcan2018dispersion,balcan2020learning}. For the true loss, we can achieve no regret if the adversary perturbation $a_i$ is limited to small balls and the centers of the balls are dispersed, which we capture using the following definition.
\begin{Def}[{$\delta$-bounded, $\beta_a$-dispersed attack}] An attack function $a_i$ is $\delta$-bounded if there exists a ball $\B(x_a,\delta)$ of radius $\delta$ such that $a_i(x)=0$ for each $x\in\X\setminus \B(x_a,\delta)$. $x_a$ is called a {\it center} $c_{a_i}$ for attack $a_i$. A sequence of attack functions $a_1,\dots,a_m$ is said to be $\beta_a$-dispersed, if the positions of attack centers $x_a$ are dispersed i.e. for all $m$ and for all $\epsilon\ge m^{-\beta_a}$,
$$\E\left[
\max_{x,x'\in\X,x\in\B(x',\epsilon)}\big\lvert
\{ i\in[m] \mid x=c_{a_i}\} \big\rvert \right] 
\le  \Tilde{O}(\epsilon m)$$.

\end{Def}
% a , i.e. the adversary attacks in small balls of diameter $T^{-\beta_a}$ with centers $\beta_a$-dispersed.
%Intuitively the adversary is not allowed to concentrate its attack in any small ball over time, no-regret learning is still possible. %TODO add lower bounds

\begin{Thm}\label{thm:robustness single task}
Given a sequence of $\beta$-dispersed adversarially perturbed losses $\Tilde{l}_i=l_i+a_i$, where $\Tilde{l}_i,l_i,a_i$ are piecewise $L$-Lipschitz functions $ \X\rightarrow[0,1]$ for $i=1,\dots,m$ and $\X\subset\R^d$, the exponential forecaster algorithm has 
$$\E[\Tilde{R}_m]=\Tilde{O}(m\lambda +\frac{\log (1/Z)}{\lambda}+(L+1)m^{1-\beta})$$ (with Z as in Theorem \ref{thm:exp-forc-meta}). If in addition we have that $a_i$ is a $m^{-\beta_a}$-bounded, $\beta_a$-dispersed attack, then 
     $$\E[R_m]=\Tilde{O}(m\lambda +\frac{\log (1/Z)}{\lambda}+(L+1)m^{1-\min\{\beta,\beta_a\}}).$$
\end{Thm}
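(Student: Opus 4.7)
The plan is to handle the two bounds separately, treating the perturbed-loss bound as an immediate consequence of Theorem~\ref{thm:exp-forc-meta} and then reducing the true-loss bound to it through a simple regret decomposition. For $\E[\tilde l_m]$-style regret, the hypotheses give that $\tilde l_1,\dots,\tilde l_m$ is a $\beta$-dispersed sequence of piecewise $L$-Lipschitz functions $\X\to[0,1]$, which is exactly the setting of Algorithm~\ref{alg:ef}; Theorem~\ref{thm:exp-forc-meta} then applies verbatim with $\rho^\ast$ instantiated as the perturbed-loss optimum $\tilde x^\ast=\argmin_{x\in\X}\sum_i\tilde l_i(x)$, which yields the first claim.

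For $\E[R_m]$ I would write $l_i=\tilde l_i-a_i$, let $x^\ast=\argmin_{x\in\X}\sum_i l_i(x)$ and $\tilde x^\ast$ be as above, and decompose
\begin{align*}
R_m \;=\; \sum_{i=1}^m\bigl(\tilde l_i(x_i)-\tilde l_i(x^\ast)\bigr) + \sum_{i=1}^m\bigl(a_i(x^\ast)-a_i(x_i)\bigr) \;\le\; \tilde R_m + \sum_{i=1}^m a_i(x^\ast),
\end{align*}
using that $\sum_i\tilde l_i(x^\ast)\ge\sum_i\tilde l_i(\tilde x^\ast)$ for the first summand and $a_i\ge 0$ for the second. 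It then suffices to bound the residual $\sum_i a_i(x^\ast)$. By $m^{-\beta_a}$-boundedness of the attack, $a_i(x^\ast)>0$ forces $c_{a_i}\in\B(x^\ast,m^{-\beta_a})$, and since $a_i\le 1$ pointwise we get $\sum_ia_i(x^\ast)\le|\{i\in[m]:c_{a_i}\in\B(x^\ast,m^{-\beta_a})\}|$. Applying $\beta_a$-dispersion of the attack centers at scale $\epsilon=m^{-\beta_a}$ bounds this count in expectation by $\tilde O(m^{1-\beta_a})$, and combining with the first claim together with $m^{1-\beta}+m^{1-\beta_a}\le 2m^{1-\min\{\beta,\beta_a\}}$ gives the stated guarantee.

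The main subtle point is the interaction between the random optimum $x^\ast$ (which depends on the $l_i$'s) and the random attack-center sequence $\{c_{a_i}\}$: a pointwise bound on the count near a fixed point would not suffice, but the dispersion definition is already phrased with $\max_{x'}$ inside the expectation, which supplies a bound uniform over all possible $x^\ast$ and removes the need for any further conditioning argument. Beyond this, the argument is just algebraic manipulation and invocation of Theorem~\ref{thm:exp-forc-meta}.
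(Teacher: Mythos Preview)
Your proof is correct, and in fact your decomposition is cleaner than the paper's. The paper writes
\[
R_m=\tilde R_m-\sum_i a_i(x_i)+\sum_i a_i(\tilde x^\ast)+\Bigl(\sum_i l_i(\tilde x^\ast)-\sum_i l_i(x^\ast)\Bigr),
\]
drops $-\sum_i a_i(x_i)\le 0$, and then has to control \emph{two} residual terms: the attack mass at $\tilde x^\ast$ and the gap $\sum_i l_i(\tilde x^\ast)-\sum_i l_i(x^\ast)$ between the perturbed and true optima, each bounded by a separate application of attack dispersion. Your route instead compares $\tilde l_i$ at $x^\ast$ directly, using $\sum_i\tilde l_i(x^\ast)\ge\sum_i\tilde l_i(\tilde x^\ast)$ to absorb the first bracket into $\tilde R_m$ and leaving only the single residual $\sum_i a_i(x^\ast)$. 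This avoids the paper's somewhat informal argument that ``the robust and natural optima coincide unless some attack occurs at $x^\ast$'' and needs only one invocation of attack dispersion. Your observation that the $\max$ inside the expectation in the dispersion definition is what makes the bound uniform over the (random) $x^\ast$ is exactly the point that justifies this step; the paper relies on the same feature but does not call it out.
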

% \begin{proof}[Proof Sketch]
% Part \ref{thm:robustness single task robust regret} is immediate from standard results on online learning of piecewise Lipschitz functions \cite{sharma2020learning}. For part \ref{thm:robustness single task natural regret}, we can use the triangle inequality to upper bound the natural regret with the sum of robust regret and a term corresponding to the difference between natural and robust optima. For the latter, we use the $\beta_a$-dispersedness of the attack to show an excess regret of $\Tilde{O}(T^{1-\beta_a})$.
% \end{proof}

Together with Theorem \ref{thm:tar}, this implies no regret meta-learning in the presence of dispersed adversaries, in particular the occurrence of unreliable data in small dispersed parts of the domain. We also show a lower bound below which establishes that our upper bounds are essentially optimal in the attack dispersion.

\begin{Thm}\label{thm:robustness lower bound}
There exist sequences of piecewise $L$-Lipschitz functions $\Tilde{l}_i,l_i,a_i$ $[0,1]\rightarrow[0,1]$ for $i=1,\dots,m$ such that for any online algorithm \begin{enumerate}\itemsep0em
    \item $\Tilde{l}_i$ is $\beta$-dispersed and $\E[\Tilde{R}_m]=\Omega(m^{1-\beta})$,
    \item $\Tilde{l}_i$ is $\beta$-dispersed, $a_i$ is $m^{-\beta}$-bounded, $\beta_a$-dispersed and $\E[R_m]=\Omega(m^{1-\min\{\beta,\beta_a\}})$.
\end{enumerate}  
\end{Thm}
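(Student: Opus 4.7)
The plan is to exhibit explicit randomized adversarial constructions for each part and apply Yao's principle to convert expected-regret bounds against deterministic algorithms over random inputs into lower bounds against arbitrary (possibly randomized) online algorithms. I work on $[0,1]$ with piecewise-constant losses throughout.

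For part~(1) I would reuse the standard construction that underlies Theorem~\ref{thm:dispersion-lb}: concentrate all nontrivial behavior of $\tilde\ell_i$ in a single ball $\B(x_0,m^{-\beta})$ and on the $\tilde\Theta(m^{1-\beta})$ ``active'' rounds in that ball run a sequence of small independent hard two-point games (with random thresholds and random sign bits) forcing $\tilde\Omega(1)$ amortized expected regret per active round. Dispersion is tight at $\varepsilon = m^{-\beta}$, where the per-ball count is $\tilde\Theta(m^{1-\beta})=\tilde O(\varepsilon m)$, so $\tilde\ell_i$ is $\beta$-dispersed and $\E[\tilde R_m]=\tilde\Omega(m^{1-\beta})$.

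Part~(2) I split on the value of $\min(\beta,\beta_a)$. If $\beta\le\beta_a$ the bound is $\Omega(m^{1-\beta})$, which I get by taking $a_i\equiv 0$: the attack is vacuously $m^{-\beta}$-bounded and $\beta_a$-dispersed (no attack centers exist), and $\ell_i=\tilde\ell_i$ inherits the lower bound from part~(1). The delicate subcase is $\beta_a<\beta$, where the idea is to hide every useful bit about $\ell_i$ inside the unobserved attack. Set $\tilde\ell_i(x)\equiv\tfrac12$, which has no discontinuities and is therefore $\beta$-dispersed for any $\beta$. Sample a single random point $p\sim\Unif[0,1]$, designate $N=\lceil m^{1-\beta_a}\rceil$ rounds as ``active'', and on each active round set the attack center $c_{a_i}=p$ and
\[
a_i(x)=\tfrac12\mathbf{1}\{x\in\B(p,m^{-\beta})\},\qquad \ell_i(x)=\tilde\ell_i(x)-a_i(x)=\tfrac12\mathbf{1}\{x\notin\B(p,m^{-\beta})\};
\]
on inactive rounds set $a_i\equiv 0$ and $\ell_i\equiv\tfrac12$. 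All three functions are piecewise constant and lie in $[0,1]$, each $a_i$ is $m^{-\beta}$-bounded by construction, and since all $N$ attack centers coincide with $p$ the per-ball count at the binding radius $\varepsilon=m^{-\beta_a}$ is $N=\tilde O(\varepsilon m)$, so the attack-center sequence is $\beta_a$-dispersed.

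Because the algorithm observes only $\tilde\ell_i\equiv\tfrac12$, its plays $x_i$ are statistically independent of $p$, so on every active round $\Pr[x_i\in\B(p,m^{-\beta})]\le 2m^{-\beta}$ (the Lebesgue measure of $\B(p,m^{-\beta})\cap[0,1]$). For $m$ large enough that $2m^{-\beta}\le\tfrac12$, the algorithm's expected true loss per active round is at least $\tfrac14$, whereas the in-hindsight optimum $x^\ast=p$ suffers $0$ loss on every active round and $\tfrac12$ on every inactive round, yielding $\E[R_m]\ge\tfrac14 N=\Omega(m^{1-\beta_a})=\Omega(m^{1-\min(\beta,\beta_a)})$. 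The main obstacle I anticipate is the joint design in part~(2): simultaneously enforcing $\beta$-dispersion of $\tilde\ell_i$, $m^{-\beta}$-boundedness of $a_i$, $\beta_a$-dispersion of its centers, and $[0,1]$ bounds on all three, while still forcing large regret on a loss the algorithm never sees. The critical trick that makes these constraints compatible is collapsing every attack center to the same random point $p$---permitted by $\beta_a$-dispersion exactly because only $m^{1-\beta_a}$ active rounds are used---combined with a constant observed loss that renders the algorithm information-theoretically blind to $p$.
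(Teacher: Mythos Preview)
Your proof is correct and, for Part~1 and the easy subcase $\beta\le\beta_a$ of Part~2, matches the paper's reduction to Theorem~\ref{thm:dispersion-lb} with $a_i\equiv 0$. For the hard subcase $\beta_a<\beta$ you take a genuinely different and cleaner route. The paper first runs the halving adversary with $\tilde\ell_i=\ell_i$ for $\Theta(m^{1-\beta})$ rounds to confine the optimum to an interval $I$ of length at most $m^{-\beta}$, then continues the halving adversary in $\ell_i$ inside $I$ for $\Theta(m^{1-\beta_a})$ more rounds while using $a_i$ supported on $I$ to flatten the observed $\tilde\ell_i$ there. You instead make $\tilde\ell_i$ globally constant so the learner is information-theoretically blind, and hide a single uniformly random point $p$ whose $m^{-\beta}$-neighborhood is the unique low-loss region of $\ell_i$ on $N=\lceil m^{1-\beta_a}\rceil$ active rounds. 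Your version sidesteps a range issue implicit in the paper's sketch (forcing $\tilde\ell_i=0$ on $I$ would need $a_i=-\ell_i\le 0$ there, outside $[0,1]$), and reduces the regret computation to a single Lebesgue-measure bound rather than another appeal to the halving adversary. One cosmetic gap to close: assign centers on the $m-N$ inactive rounds (e.g.\ equispaced in $[0,1]$) and note that the attack-center dispersion bound holds for all $\epsilon\ge m^{-\beta_a}$, not only at the binding scale, since $N\le\epsilon m$ throughout that range.
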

% !TEX root = main.tex

\section{Conclusion}

In this paper we studied the initialization-based meta-learning of piecewise-Lipschitz functions, demonstrating how online convex optimization over an adaptive discretization can find an initialization that improves the performance of the exponential forecaster across tasks, assuming the tasks have related optima. 
We then applied this result in two settings:
online configuration of clustering algorithms and adversarial robustness in online learning.
For the latter we introduced a dispersion-based understanding of robustness that we believe to be of independent interest.
In addition, there are further interesting applications of our work to other algorithm configuration problems.

\section*{Acknowledgments}

This material is based on work supported in part by the National Science Foundation under grants CCF-1535967, CCF-1910321, IIS-1618714, IIS-1705121, IIS-1838017, IIS-1901403, IIS-2046613, and SES-1919453; the Defense Advanced Research Projects Agency under cooperative agreements HR00112020003 and FA875017C0141; an AWS Machine Learning Research Award; an Amazon Research Award; a Bloomberg Research Grant; a Microsoft Research Faculty Fellowship; an Amazon Web Services Award; a Facebook Faculty Research Award; funding from Booz Allen Hamilton Inc.; and a Block Center Grant.
Any opinions, findings and conclusions or recommendations expressed in this material are those of the author(s) and do not necessarily reflect the views of any of these funding agencies.

\bibliography{refs}
\bibliographystyle{plain}

\appendix
% !TEX root = main.tex
\newpage
\section{Additional Related Work}
Data-driven algorithm selection is an algorithm design paradigm for setting algorithm parameters when multiple instances of a problem are available or need to be solved \cite{blum2020technical,balcan2020data}. It is familiar as {\it hyperparameter tuning} to machine learning practitioners which often involves a ``grid search'', ``random search'' or gradient-based search, with no formal guarantees of convergence to a global optimum. By modeling the problem of identifying a good algorithm from data as a statistical learning problem, general learning algorithms have been developed which exploit smoothness of the underlying algorithmic distribution \cite{balcan2018dispersion}. This provides a new algorithmic perspective, along with tools and insights for good performance under this smoothed analysis for fundamental problems including clustering, mechanism design, and mixed integer programs, and providing guarantees like differential privacy, adaptive online learning and adversarial robustness \cite{balcan2019learning,balcan2018general,balcan2020learning,balcan2020power}.

\section{Proofs}

\subsection{Proof of Theorem~\ref{thm:exp-forc-meta}}

\begin{proof}
The proof adapts the analysis of the exponential forecaster in \cite{balcan2018dispersion}. Let $W_t = \int_Cw_t(\rho) d\rho$ be
the normalizing constant and $P_t = \E_{\rho\sim p_t}
[u_t(\rho)]$ be the expected payoff at round $t$. Also let $U_t(\rho)=\sum_{j=1}^{t}u_j(\rho)$. We seek to bound $R_T=OPT-P(T)$, where $OPT=U_{T}(\rho^*)$ for optimal parameter $\rho^*$ and $P(T)=\sum_{t=1}^{T}P_t$ is the expected utility of Algorithm \ref{alg:ef} in $T$ rounds.
We will do this by lower bounding $P(T)$ and upper bounding $OPT$ by analyzing the normalizing constant  $W_t$.

{\it Lower bound for $P(T)$}: This follows from standard arguments, included for completeness. Using the definitions in Algorithm \ref{alg:ef}, it follows that
\begin{align*}\frac{W_{t+1}}{W_{t}} &= \frac{\int_{\C}e^{\lambda u_t(\rho)}w_{t}(\rho)d\rho}{W_{t}} = \int_{\C}e^{\lambda u_t(\rho)}\frac{w_{t}(\rho)}{W_{t}}d\rho = \int_{\C}e^{\lambda u_t(\rho)}p_{t}(\rho)d\rho.\end{align*}
Use inequalities $e^{\lambda x}\le1+(e^{\lambda}-1)x$ for $x\in[0,1]$ and $1+x\le e^x$ to conclude
\begin{align*}\frac{W_{t+1}}{W_{t}} \le\int_{\C}p_{t}(\rho)\left(1+(e^{\lambda}-1)u_t(\rho)\right)d\rho = 1+(e^{H\lambda}-1){P_t} \le \exp\left((e^{\lambda}-1){P_t}\right).\end{align*}
Finally, we can write $W_{T+1}/W_1$ as a telescoping product to obtain
\[\frac{W_{T+1}}{W_{1}}=\prod_{t=1}^{T}\frac{W_{t+1}}{W_{t}}\le \exp\left((e^{\lambda}-1){\sum_tP_t}\right) = \exp\left({P(T)(e^{\lambda}-1)}\right),\]
or, $W_{T+1}\le \exp\left({P(T)(e^{\lambda}-1)}\right)\int_Cw_1(\rho)d\rho$.

% Observe that $w_t(\rho)=w_1(\rho)\exp(\lambda U_{t-1}(\rho))$.

{\it Upper bound for $OPT$}: Let $\B^* (r)$ be the ball of radius $r$ around $\rho^*$. If there are at most $k$ discontinuities in any ball of radius $r$, we can conclude that for all $\rho\in\B^* (r)$, $U_{T}(\rho) \ge OPT - k-LTr$. Now, since $W_{T+1}=\int_Cw_1(\rho)\exp(\lambda U_{T}(\rho))d\rho$, we have

\begin{align*}
     W_{T+1}
&\ge \int_{\B^* (r)}w_1(\rho)e^{\lambda U_{T}(\rho)}d\rho\\&\ge \int_{\B^* (r)}w_1(\rho)e^{\lambda(OPT - k-LTr)}d\rho \\&=e^{\lambda(OPT - k-LTr)}\int_{\B^* (r)}w_1(\rho)d\rho.
\end{align*}

Putting together with the lower bound, and rearranging, gives
\begin{align*}OPT-P_T&\le \frac{P(T)(e^{\lambda}-1-\lambda)}{\lambda}+\frac{\log (1/Z)}{\lambda}+k+LTr\\
&\le T\lambda +\frac{\log (1/Z)}{\lambda}+k+LTr,\end{align*}
where we use that $P(T)\le T$ and for all $x\in[0,1], e^x \le 1 + x + (e-2)x^2$. Take expectation over the sequence of utility functions and apply dispersion to conclude the result.
\end{proof}
\subsection{Proof of Theorem~\ref{thm:dispersion-lb}}
We extend the construction in \cite{balcan2020learning} to the multi-task setting. The main difference is that we generalize the construction for any task similarity, and show that we get the same lower bound asymptotically.
\begin{proof}
    Define $u^{(b,x)}(\rho)=I[b=0]*I[\rho>x]+I[b=1]*I[\rho\le x]$, where $b\in\{0,1\}$, $x,\rho\in[0,1]$ and $I[\cdot]$ is the indicator function. For each iteration the adversary picks  $u^{(0,x)}$ or $u^{(1,x)}$ with equal probability for some $x\in [a,a+D^*]$, the ball of diameter $D^*$ containing all the optima. 
    
    For each task $t$, $m-\frac{3}{D^*}m^{1-\beta}$ functions are presented with the discontinuity $x\in [a+D^*/3,a+2D^*/3]$ while ensuring $\beta$-dispersion. The remaining $\frac{3}{D^*}m^{1-\beta}$ are presented with discontinuities located in successively halved intervals (the `halving adversary') containing the optima in hindsight, any algorithm gets half of these wrong in expectation. It is readily verified that the functions are $\beta$-dispersed. The construction works provided $m$ is sufficiently large ($m>\left(\frac{3}{D^*}\right)^{1/\beta}$). The task averaged regret is therefore also $\Tilde{\Omega}(m^{1-\beta})$.
\end{proof}

\subsection{Proof of Theorem~\ref{lem:aruba}}

\begin{proof}
	\begin{align*}
	\sum_{t=1}^T\sum_{m=1}^m
	\ell_{t,i}(\rho_{t,i})-\min_{\rho_t^\ast\in C}&\sum_{i=1}^m\ell_{t,i}(\rho_t^\ast)\\
	&\le\sum_{t=1}^TU_t(w_t,v_t)\\
	&\le\min_{v>0}H_T(v)\sqrt m+\sum_{t=1}^T\left(v+\frac{f_t(w_t)}v\right)\sqrt m+g(m)\\
	&\le\min_{w:C\mapsto\R_{\ge0},v>0}H_T(v)\sqrt m+\frac{F_T(w)\sqrt m}v+\sum_{t=1}^T\left(v+\frac{f_t(w)}v\right)\sqrt m+g(m)\\
	&\le\left(H_T(V)+\min\left\{\frac{F_T(w^\ast)}V,2\sqrt{F_T(w^\ast)T}\right\}+2TV\right)\sqrt m+Tg(m)
	\end{align*}
	where the last step is achieved by substituting $w=w^\ast$ and $v=\max\left\{V,\sqrt{F_T(w^\ast)/T}\right\}$.
\end{proof}

\subsection{Proof of Lemma~\ref{lem:equivalent}}

\begin{proof}
	Define a probability measure $p:C\mapsto\R_{\ge0}$ that is constant on all elements $\tilde D\in\D_t$ of the discretization at time $t$, taking the value $p(\rho)=\frac1{\vol(\tilde D)}\sum_{D\in D_T,D\subset\tilde D}\*w_{[D]}~\forall~\rho\in\tilde D$.
	Note that for any $D\in\D_T$ that is a subset of $\tilde D$ we have that 
	$$\*p_{[D]}=\int_D\tilde w(\rho)d\rho=\frac{\*v_{[D]}}{\sum_{D'\in\D_T,D'\subset\tilde D}\*v_{[D']}}\sum_{D'\in\D_T,D'\subset\tilde D}\*w_{[D']}$$
	Then
	\begin{align*}
	D_{KL}&(\*p||\*{\hat v})-\eta\sum_{s\le t}\log\langle\*w_s^\ast,\*p\rangle\\
	&=\sum_{\tilde D\in\D_t}\sum_{D\in\D_T,D\subset\tilde D}\*p_{[D]}\log\frac{\*p_{[D]}}{\*{\hat v}_{[D]}}-\eta\sum_{s\le t}\log\sum_{\tilde D\in\D_t}\sum_{D\in\D_T,D\subset\tilde D}{\*w_s^\ast}_{[D]}\*p_{[D]}\\
	&=\sum_{\tilde D\in\D_t}\sum_{D\in\D_T,D\subset\tilde D}\frac{\*v_{[D]}}{\sum_{D'\in\D_T,D'\subset\tilde D}\*v_{[D']}}\sum_{D'\in\D_T,D'\subset\tilde D}\*w_{[D']}\log\frac{\sum_{D'\in\D_T,D'\subset\tilde D}\*w_{[D']}}{\sum_{D'\in\D_T,D'\subset\tilde D}\*{\hat v}_{[D']}}\\
	&\quad-\eta\sum_{s\le t}\log\sum_{\tilde D\in\D_t}\sum_{D\in\D_T,D\subset\tilde D}\frac{{\*w_s^\ast}_{[D]}\*v_{[D]}}{\sum_{D'\in\D_T,D'\subset\tilde D}\*v_{[D']}}\sum_{D'\in\D_T,D'\subset\tilde D}\*w_{[D']}\\
	&\le\sum_{\tilde D\in\D_t}\sum_{D\in\D_T,D\subset\tilde D}\frac{\*v_{[D]}}{\sum_{D'\in\D_T,D'\in\tilde D}\*v_{[D']}}\sum_{D'\in\D_T,D'\subset\tilde D}\*w_{D'}\log\frac{\*w_{[D']}}{\*{\hat v}_{[D']}}\\
	&\quad-\eta\sum_{s\le t}\log\sum_{\tilde D\in\D_t,\tilde D\subset C_s}\sum_{D\in\D_T,D\subset\tilde D}\frac{\*v_{[D]}}{\sum_{D'\in\D_T,D'\subset\tilde D}\*v_{[D']}}\sum_{D'\in\D_T,D'\subset\tilde D}\*w_{[D']}\\
	&=\sum_{\tilde D\in\D_t}\sum_{D'\in\D_T,D'\in\tilde D}\*w_{[D']}\log\frac{\*w_{[D']}}{\*{\hat v}_{[D']}}-\eta\sum_{s\le t}\log\sum_{\tilde D\in\D_t,\tilde D\subset C_s}\sum_{D'\in\D_T,D'\subset\tilde D}\*w_{[D']}\\
	&=D_{KL}(\*w||\*{\hat v})-\eta\sum_{s\le t}\log\langle\*w_s^\ast,\*w\rangle
	\end{align*}
	where the inequality follows from applying the log-sum inequality to the first term and the fact that ${\*w_s^\ast}_{[D]}=\1_{D\subset C_s}$ in the second term.
	Note that we also have
	$$\|\*p\|_1
	=\sum_{\tilde D\in\D_t}\sum_{D\in\D_T,D\subset\tilde D}\frac{\*v_{[D]}}{\sum_{D'\in\D_T,D'\subset\tilde D}\*v_{[D']}}\sum_{D'\in\D_T,D'\subset\tilde D}\*w_{[D']}
	=\sum_{\tilde D\in\D_t}\sum_{D'\in\D_T,D'\subset \tilde D}\*w_{[D']}
	=1$$
	and 
	$$\*p_{[D]}
	=\frac{\*v_{[D]}}{\sum_{D'\in\D_T,D'\subset\tilde D}\*v_{[D']}}\sum_{D'\in\D_T,D'\subset\tilde D}\*w_{[D']}
	\ge\frac{\gamma\*v_{[D]}}{\sum_{D'\in\D_T,D'\subset\tilde D}\*v_{[D']}}\sum_{D'\in\D_T,D'\subset\tilde D}\*{\hat v}_{[D']}
	=\gamma\*{\hat v}_{[D]}$$
	so $\*p$ satisfies the optimization constraints.
	Therefore, since $\*w$ was defined to be the minimum of the sum of the KL-divergence (a strongly-convex function \cite[Example~2.5]{shalev-shwartz2011oco}) and a convex function, it is unique and so coincides with $\*p$.
	
	On the other hand
	\begin{align*}
	D_{KL}(\*p(t)||\*{\hat v(t)})-\eta\sum_{s\le t}\log\langle\*w_s^\ast(t),\*p(t)\rangle
	&\le D_{KL}(\*p||\*{\hat v})-\eta\sum_{s\le t}\log\langle\*w_s^\ast,\*p\rangle\\
	&=D_{KL}(\*w||\*{\hat v})-\eta\sum_{s\le t}\log\langle\*w_s^\ast,\*w\rangle\\
	&\le D_{KL}(\*{\tilde w}||\*{\hat v})-\eta\sum_{s\le t}\log\langle\*w_s^\ast,\*{\tilde w}\rangle\\
	&=D_{KL}(\*{\tilde w}(t)||\*{\hat v}(t))-\eta\sum_{s\le t}\log\langle\*w_s^\ast(t),\*{\tilde w}(t)\rangle
	\end{align*}
	where the first inequality follows from above and the second from the optimality of $\*w$.
	Note that by nonnegativity the discretization of $p$ does not affect its measure over $C$, so $\|\*p\|_1=1\implies\|\*p(t)\|_1=1$.
	Finally, also from above we have 
	$$\*p(t)_{[D]}=\sum_{D'\in\D_T,D'\subset D}\*p_{[D']}\ge\gamma\sum_{D'\in\D_T,D'\subset D}\*p_{[D']}\*{\hat v}_{[D']}=\gamma\*{\hat v}(t)_{[D]}$$
	Thus as before $\*p(t)$ satisfies the optimization constraints, which with the previous inequality and the uniqueness of the optimum $\*{\tilde w}(t)$ implies that $\*p(t)=\*{\tilde w}(t)$.
	Finally, since $\tilde w$ is constant on all elements of the discretization $\D_t$ of $C$ this last fact implies that $\*p=\*{\tilde w}$, which together with $\*p=\*w$ implies the result.
\end{proof}

\subsection{Lipschitzness for Algorithm~\ref{alg:ftrl}}

\begin{Clm}\label{clm:overlip}
	The loss $f_t$ is $\frac1{\gamma\vol(C_t)}$-Lipschitz w.r.t. $\|\cdot\|_1$ over the set $\{\*w\in\R^{|\D_T|}:\|\*w\|_1=1,\*w\ge\gamma\*{\hat v}\}$.
\end{Clm}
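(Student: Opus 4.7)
The plan is to exploit the fact that on the feasible set we have $\|\*w\|_1=1$, so $\*{\hat w}=\*w$ and therefore $f_t(\*w)=-\log\langle\*w_t^\ast,\*w\rangle$ is a smooth (and in fact concave-composed-with-linear, hence convex) function of $\*w$ in the interior of the constraint set. The standard route to Lipschitzness in $\|\cdot\|_1$ is then to bound the dual norm of the gradient, $\|\nabla f_t(\*w)\|_\infty$, uniformly over the feasible set; this is the strategy I would follow.

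First I would compute the gradient explicitly:
\begin{equation*}
\nabla f_t(\*w)=-\frac{\*w_t^\ast}{\langle\*w_t^\ast,\*w\rangle}.
\end{equation*}
Since $\*w_t^\ast\in\{0,1\}^{|\D_T|}$ (it is the 0/1 indicator that $D\subset C_t$), we have $\|\*w_t^\ast\|_\infty=1$, and hence $\|\nabla f_t(\*w)\|_\infty=1/\langle\*w_t^\ast,\*w\rangle$. The task then reduces to lower-bounding the inner product $\langle\*w_t^\ast,\*w\rangle$ over the constraint set.

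For this I would invoke the lower bound $\*w\ge\gamma\*{\hat v}$ coordinatewise, which yields
\begin{equation*}
\langle\*w_t^\ast,\*w\rangle\ge\gamma\langle\*w_t^\ast,\*{\hat v}\rangle=\gamma\sum_{D\in\D_T,\,D\subset C_t}\*{\hat v}_{[D]}=\gamma\,\vol(C_t),
\end{equation*}
where the last equality uses that $\*{\hat v}$ is the measure on $\D_T$ induced by the volume on $C$ (with the normalization convention under which $\vol(C_t)$ already denotes a probability). Combining, $\|\nabla f_t(\*w)\|_\infty\le 1/(\gamma\vol(C_t))$ everywhere on the feasible set, and since the feasible set is convex the mean value theorem on any segment gives the claimed $\frac{1}{\gamma\vol(C_t)}$-Lipschitz bound in $\|\cdot\|_1$.

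There is no genuine obstacle here; the only subtlety worth double-checking is the normalization of $\*{\hat v}$, to make sure that $\langle\*w_t^\ast,\*{\hat v}\rangle$ matches $\vol(C_t)$ as written in the claim (this is consistent with the paper's convention that $\*{\hat v}=\*v/\|\*v\|_1$ is a probability vector on the discretization, together with interpreting $\vol(C_t)$ as the normalized volume $\vol(C_t)/\vol(C)$). The rest is just the standard ``gradient bounded in the dual norm implies Lipschitz in the primal norm'' argument applied on a convex domain.
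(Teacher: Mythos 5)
Your proof is correct and takes exactly the same approach as the paper: compute $\nabla f_t(\*w)=-\*w_t^\ast/\langle\*w_t^\ast,\*w\rangle$, bound its $\ell_\infty$ norm using $\*w\ge\gamma\*{\hat v}$ to lower bound the denominator, and conclude via the standard dual-norm-of-gradient argument. Your side remark about the normalization convention for $\vol(C_t)$ is a fair observation (the paper elides the factor $\vol(C)$ in writing $\langle\*w_t^\ast,\*{\hat v}\rangle=\vol(C_t)$), but it does not affect the substance of the argument.
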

\begin{proof}
	$$\max_{\|\*w\|_1=1,\*w\ge\gamma\*{\hat v}}\|\nabla\log\langle\*w_t^\ast,\*w\rangle\|_\infty
	=\max_{D,\|\*w\|_1=1,\*w\ge\gamma\*{\hat v}}\frac{{\*w_t^\ast}_{[D]}}{\langle\*w_t^\ast,\*w\rangle}
	\le\frac1{\langle\*w_t^\ast,\gamma\*{\hat v}\rangle}
	=\frac1{\gamma\vol(C_t)}$$
\end{proof}

\subsection{Proof of Corollary~\ref{cor:ewoo}}
\begin{proof}
	Using first-order conditions we have that the optimum in hindsight of the functions $h_t$ satisfies
	$$v^2
	=\frac1T\sum_{t=1}^Tf_t(w_t)
	=-\frac1T\sum_{t=1}^T\log\langle\*w_t^\ast,\*w_t\rangle
	\le\frac1T\sum_{t=1}^T\log\frac1{\gamma\vol(C_t)}$$
	Applying \cite[Corollary~C.2]{khodak2019adaptive} with $\alpha_t=1$, $B_t^2=f_t(w_t)$, and $D^2-\log\gamma$ instead of $D^2$ yields the result.
\end{proof}

\subsection{Proof of Corollary~\ref{cor:ftl}}
\begin{proof}
	Using first-order conditions we have that the optimum in hindsight of the functions $h_t$ satisfies
	$$v^2
	=\frac1T\sum_{t=1}^Tf_t(w_t)
	=-\frac1T\sum_{t=1}^T\log\langle\*w_t^\ast,\*w_t\rangle
	\le\frac1T\sum_{t=1}^T\log\frac1{\gamma\vol(C_t)}$$
	Applying \cite[Proposition~B.2]{khodak2019adaptive} with $\alpha_t=1$, $B_t^2=f_t(w_t)$, and $D^2-\log\gamma$ instead of $D^2$ yields the result.
\end{proof}

\subsection{Proof of Theorem~\ref{thm:tar}}

\begin{proof}
	We have $F_T(w^\ast)=\tilde O(\sqrt {BG}T^\frac34)$ and $H_T(V)=\tilde O(\min\{1/V,\sqrt[5]T\}T^\frac35)$ from Corollaries~\ref{cor:ewoo} and~\ref{cor:ftl}.
	Substituting into Lemma~\ref{lem:equivalent} and simplifying yields
	$$\tilde O\left(\frac{\min\left\{\frac1V,\sqrt[4]T\right\}}{\sqrt T}+\min\left\{\frac{\sqrt{BG}}{V\sqrt[4]T},\frac{\sqrt[4]{BG}}{\sqrt[8]T}\right\}+2V\right)\sqrt m+g(m)$$
	Simplifying further yields the result.
\end{proof}

\subsection{Proof of Theorem \ref{thm:robustness single task}}

\begin{proof}
The bound on $\E[\Tilde{R}_T]$ is immediate from Theorem \ref{thm:exp-forc-meta}. For $\E[R_T]$, we can upper bound the natural regret with the sum of robust regret, total adversarial perturbation at the optimum and a term corresponding to the difference between the loss of natural and robust optima. 
\begin{align*}
    R_T &=\sum_{t=1}^T l_t(x_t) - \min_{x\in\X}\sum_{t=1}^T l_t(x)\\
    &=\Tilde{R}_T+\sum_{t=1}^T l_t(x_t) - \sum_{t=1}^T \Tilde{l}_t(x_t) + \min_{x\in\X}\sum_{t=1}^T \Tilde{l}_t(x) - \min_{x\in\X}\sum_{t=1}^T l_t(x) \\
    &=\Tilde{R}_T-\sum_{t=1}^T a_t(x_t) + \sum_{t=1}^Ta_t(\Tilde{x}^*)+ \sum_{t=1}^Tl_t(\Tilde{x}^*) - \sum_{t=1}^Tl_t(x^*)\\
    &\le \Tilde{R}_T + \sum_{t=1}^Ta_t(\Tilde{x}^*)+\Big\lvert \sum_{t=1}^Tl_t(\Tilde{x}^*) - \sum_{t=1}^Tl_t(x^*)\Big\rvert
\end{align*}
where $\Tilde{x}^* = \argmin_{x\in\X}\sum_{t=1}^T \Tilde{l}_t(x)$ and $x^* = \argmin_{x\in\X}\sum_{t=1}^T l_t(x)$. We now use the $\beta_a$-dispersedness of the attack to show an excess expected regret of $\Tilde{O}(T^{1-\beta_a})$.
Using attack dispersion on a ball of radius $T^{-\beta_a}$ around $\Tilde{x}^*$, the number of attacks that have non-zero $a_t(\Tilde{x}^*)$ is at most $\Tilde{O}(T^{1-\beta_a})$, and therefore $\sum_{t=1}^Ta_t(\Tilde{x}^*)\le \Tilde{O}(T^{1-\beta_a})$. Further, observe that the robust and natural optima coincide unless some attack occurs at the natural optimum $x^*$. % TODO there is an additional case if attack can both increase and decrease the loss, i.e. loss at x-tilde-star is made lower by the adversary, but that happens with low probability as well.
We can use attack dispersion at $x^*$, and a union bound across rounds, to conclude $\E\lvert \sum_{t=1}^Tl_t(\Tilde{x}^*) - \sum_{t=1}^Tl_t(x^*)\rvert\le\Tilde{O}(T^{1-\beta_a})$ which concludes the proof.
\end{proof}

\subsection{Proof of Theorem \ref{thm:robustness lower bound}}

\begin{proof}
Part 1 follows from the lower bound in Theorem \ref{thm:dispersion-lb}, by setting $\Tilde{l}_i=l_i$ as the loss sequence used in the proof.

To establish Part 2, we extend the construction as follows. $\Tilde{l}_i=l_i$ are both equal and correspond to the `halving adversary' from the proof of Theorem \ref{thm:dispersion-lb} for the first $\Theta(m^{1-\beta})$ rounds. If $\beta\le\beta_a$ we are done, so assume otherwise. Let $I$ denote the interval containing the optima over the rounds so far. Notice that the length of $I$ is at most $|I|\le (\frac{1}{2})^{\Theta(m^{1-\beta})}\le (\frac{1}{2})^{\beta \log m}=m^{-\beta}$ for $\beta>0$. For further rounds $l_i$ continues to be the halving adversary for $\Theta(m^{1-\beta_a})$ rounds, which implies any algorithm suffers $\Omega(m^{1-\beta_a})$ regret. We set attack $a_i$ on interval $I$ such that $\Tilde{l}_i=0$ on $I$ on these rounds. This ensures that $a_i$ is $\beta_a$-dispersed and $\Tilde{l}_i$ is $\beta$-dispersed. Putting together with the case $\beta\le\beta_a$, we obtain $\Omega(m^{1-\min\{\beta,\beta_a\}})$ bound on the regret of any algorithm.
\end{proof}

\section{Learning algorithmic parameters for combinatorial problems}\label{app: combinatorial}
We discuss implications of our results for several combinatorial problems of widespread interest including integer quadratic programming and auction mechanism design. We will need the following theorem from \cite{balcan2021data}, which generalizes the recipe for establishing dispersion given by \cite{balcan2020semi} for $d=1,2$ dimensions to arbitrary constant $d$ dimendions. It is straightforward to apply the recipe to establish dispersion for these problems, which in turn implies that our meta-learning results are applicable. We demonstrate this for a few important problems below for completeness.

\begin{Thm}[\cite{balcan2021data}]\label{thm:dispersion-recipe}
Let $l_1, \dots, l_m : \R^d \rightarrow \R$ be independent piecewise $L$-Lipschitz functions, each having discontinuities specified by a collection of at most $K$ algebraic hypersurfaces of bounded degree. Let $\mathcal{L}$ denote the set of axis-aligned paths between pairs of points in $\R^d$, and for each $s\in \mathcal{L}$ define
 $D(m, s) = |\{1 \le t \le m \mid l_t\text{ has a discontinuity along }s\}|$. Then we
have $\E[\sup_{s\in \mathcal{L}} D(m, s)] \le \sup_{s\in \mathcal{L}} \E[D(m, s)] +
O(\sqrt{m \log(mK)})$.
\end{Thm}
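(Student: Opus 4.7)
The plan is to combine a standard uniform-concentration argument with a combinatorial bound on the effective complexity of the family $\mathcal{L}$ when evaluated through the discontinuity-indicators of $l_1,\dots,l_m$. First I would rewrite $D(m,s) = \sum_{t=1}^m Y_{t,s}$ where $Y_{t,s} = \mathbb{1}[l_t\text{ has a discontinuity along }s]$, so that the quantity to control is
\[
\E\Bigl[\sup_{s\in\mathcal{L}} \sum_{t=1}^m \bigl(Y_{t,s}-\E Y_{t,s}\bigr)\Bigr].
\]
Since $Y_{t,s}\in\{0,1\}$ and the $l_t$ are independent, once I show that the projection $\Pi_m=\{(Y_{1,s},\dots,Y_{m,s}):s\in\mathcal{L}\}\subset\{0,1\}^m$ has only polynomially many distinct patterns, Massart's finite-class lemma (equivalently Hoeffding's inequality plus a union bound) yields
\[
\E\Bigl[\sup_{s\in\mathcal{L}} \sum_{t=1}^m\bigl(Y_{t,s}-\E Y_{t,s}\bigr)\Bigr]\le \sqrt{2m\log|\Pi_m|},
\]
and the theorem follows by rearrangement.

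The heart of the argument is thus the combinatorial bound $|\Pi_m|\le (mK)^{O(d)}$. To obtain this I would fix an axis direction $i\in[d]$ and parameterize each axis-aligned segment in that direction by $(x_{-i},a,b)\in\R^{d-1}\times\R^2$, where $x_{-i}$ specifies the $d-1$ fixed coordinates and $[a,b]$ is the range of the varying $i$-th coordinate. The union of all discontinuity hypersurfaces of $l_1,\dots,l_m$ consists of at most $mK$ bounded-degree algebraic hypersurfaces, so the intersection of each with a line parallel to $e_i$ is a bounded-degree variety in one variable, hence a finite set of points whose cardinality and configuration are locally constant in $x_{-i}$. Classical bounds on arrangements of bounded-degree hypersurfaces (e.g., Warren's theorem, Milnor--Thom) give at most $(mK)^{O(d-1)}$ cells in $\R^{d-1}$ on each of which the pattern of intersections is fixed; within a cell, varying the endpoints $(a,b)$ only changes which of at most $O(mK)$ critical values fall inside the segment, contributing a further $O((mK)^2)$ patterns. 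Summing over the $d$ axis directions yields $|\Pi_m|\le (mK)^{O(d)}$, so $\log|\Pi_m|=O(d\log(mK))=O(\log(mK))$ when $d$ is constant.

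Plugging this into Massart's bound gives the claimed additive error of $O(\sqrt{m\log(mK)})$. I anticipate the combinatorial step to be the main obstacle: one must be careful that the pattern-change loci in $(x_{-i},a,b)$ are themselves semi-algebraic of bounded complexity so that the arrangement-complexity theorems apply cleanly, and one must verify that intersections that occur exactly at an endpoint or on a lower-dimensional stratum do not inflate the count. The quickest way to discharge this is to cite the algebraic-geometric complexity bounds referenced in \cite{balcan2021data}, avoiding a direct combinatorial enumeration; the rest of the proof is then a direct application of Massart's lemma to the independent Bernoulli summands.
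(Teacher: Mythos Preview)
The paper does not prove this theorem; it is quoted verbatim from \cite{balcan2021data} and used as a black box in the dispersion verifications of Appendix~\ref{app: combinatorial}. There is therefore no in-paper proof to compare against.

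Your outline is the standard one for such results and is essentially correct: bound the growth function of the Boolean class $\{s\mapsto (Y_{1,s},\dots,Y_{m,s})\}$ via algebraic-geometric complexity, then apply uniform concentration for independent bounded summands. Two points deserve tightening. First, the parenthetical ``equivalently Hoeffding plus a union bound'' is misleading: the finite set of realized patterns $\Pi_m$ is itself random, so a naive union bound over it is not valid. The clean route is symmetrization followed by Massart's lemma applied \emph{conditionally} on $l_1,\dots,l_m$; this is what actually yields $\sqrt{2m\log|\Pi_m|}$ here and is presumably what you intend by ``Massart's finite-class lemma.'' Second, your combinatorial step handles one axis-aligned segment at a time and then ``sums over the $d$ axis directions,'' but an axis-aligned path is a concatenation of $d$ linked segments jointly determined by the two endpoints $(p,q)\in\R^{2d}$. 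The pattern $(Y_{1,s},\dots,Y_{m,s})$ changes only when $(p,q)$ crosses one of $O(mKd)$ bounded-degree semi-algebraic hypersurfaces encoding ``some discontinuity surface of $l_t$ meets segment $i$ of the path,'' so the bound $|\Pi_m|\le(mK)^{O(d)}$ should come from a single application of Warren/Milnor--Thom in $\R^{2d}$, not from a per-direction sum or product. With these two fixes the argument goes through and matches the approach in \cite{balcan2021data}.
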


\subsection{Greedy knapsack}
We are given a knapsack with capacity $\texttt{cap}$ and items $i\in[m]$ with sizes $w_i$ and values $v_i$. The goal is to select a subset $S$ of items to add to the knapsack such that $\sum_{i\in S}w_i\le \texttt{cap}$ while maximizing the total value $\sum_{i\in S}v_i$ of selected items. We consider a general greedy heuristic to insert items with largest $v_i/w_i^{\rho}$ first (due to \cite{gupta2017pac}) for $\rho\in[0,10]$. 

The classic greedy heuristic sets $\rho=1$ and can be used to provide a 2-approximation for the problem. However other values of $\rho$ can improve the knapsack objective on certain problem instances. For example,
for the value-weight pairs $\{(0.99,1),(0.99,1),(1.01,1.01)\}$ and capacity $\texttt{cap}=2$ the classic heuristic $\rho=1$ gives value $1.01$ as the greedy heuristic is maximized for the third item. However, using $\rho=3$ (or any $\rho>1+\log(1/0.99)/\log(1.01)>2.01$) allows us to pack the two smaller items giving the optimal value $1.98$.

Our result (Theorem \ref{thm:tar}) when applied to this problem shows that it is possible to learn the optimal parameter values for the greedy heuristic algorithm family for knapsack from similar tasks.

\begin{Thm}
Consider instances of the knapsack problem given by bounded weights $w_{i,j}\in[1,C]$ and $\kappa$-bounded independent values $v_{i,j}\in[0,1]$ for $i\in[m],j\in[T]$. Then the asymptotic task-averaged regret for learning the algorithm parameter $\rho$ for the greedy heuristic family described above is $o_T(1)+2V\sqrt{m}+O(\sqrt{m})$.
\end{Thm}

\begin{proof}
Lemma 11 of \cite{balcan2020semi} shows that the loss functions form a $\frac{1}{2}$-dispersed sequence. The result follows by applying Theorem \ref{thm:tar} with $\beta=\frac{1}{2}$.
\end{proof}

\subsection{$k$-center clustering} We consider the $\alpha$-Lloyd's clustering algorithm family from \cite{balcan2018data}, where the initial $k$ centers in the procedure are set by sampling points with probability  proportional to $d^\alpha$ where $d$ is the distance from the centers selected so far for some $\alpha\in[0,D],D\in\R_{\ge0}$. For example, $\alpha=0$ corresponds to the vanilla $k$-means with random initial centers, and $\alpha=2$ setting is the $k$-means++ procedure. For this algorithm family, we are able to show the following guarantee. Interestingly, for this family it is sufficient to rely on the internal randomness of the algorithmic procedure and we do not need assumptions on data smoothness.

\begin{Thm}
Consider instances of the $k$-center clustering problem on $n$ points, with Hamming loss $l_{i,j}$ for $i\in[m],j\in[T]$ against some (unknown) ground truth clustering. Then the asymptotic task-averaged regret for learning the algorithm parameter $\alpha$ for the $\alpha$-Lloyd's clustering algorithm family of \cite{balcan2018data} is $o_T(1)+2V\sqrt{m}+O(\sqrt{m})$.
\end{Thm}

\begin{proof}
We start by applying Theorem 4 from \cite{balcan2018data} to an arbitrary $\alpha$-interval $[\alpha_0,\alpha_0+\epsilon]\subseteq[0,D]$ of length $\epsilon$. The expected number of discontinuities (expectation under the internal randomness of the algorithm when sampling successive centers), is at most
$$D(m,\epsilon)=O(nk \log(n) \log(\max\{(\alpha_0+\epsilon)/\alpha_0),(\alpha_0+\epsilon)\log R\}),$$
where $R$ is an upper bound on the
ratio between any pair of non-zero distances. Considering cases $\alpha_0\lessgtr\frac{1}{\log R}$ and using the inequality $\log(1+x)\le x$ for $x\ge 0$ we get that there are, in expectation, at most $O(\epsilon nk \log n \log R)$ discontinuities in any interval of length $\epsilon$. Theorem \ref{thm:dispersion-recipe} now implies $\frac{1}{2}$-dispersion using the recipe from \cite{balcan2020semi}. The task-averaged regret bound follows from Theorem \ref{thm:tar}.
\end{proof}

\subsection{Integer quadratic programming (IQP)} The objective is to maximize a quadratic function $z^TAz$ for $A$ with non-negative diagonal entries, subject to $z\in\{0,1\}^n$. In the classic Goemans-Williamson algorithm \cite{goemans1995improved} one solves an SDP relaxation $U^TAU$ where columns $u_i$ of $U$ are unit vectors. $u_i$ are then rounded to $\{\pm 1\}$ by projecting on a vector $Z$ drawn according to the standard Gaussian, and using $\texttt{sgn}(\langle u_i,Z\rangle)$. A simple parametric family is $s$-linear rounding where the rounding is as before if $|\langle u_i,Z\rangle|>s$ but uses probabilistic rounding to round $u_i$ to 1 with probability $\frac{1+(\langle u_i,Z\rangle)/s}{2}$. The dispersion analysis of the problem from \cite{balcan2018dispersion} and the general recipe from \cite{balcan2020semi} imply that our results yield low task-averaged regret for learning the parameter of the $s$-linear rounding algorithms.

\begin{Thm}
Consider instances of IQP given by matrices $A_{i,j}$ and rounding vectors $Z_{i,j}\sim \mathcal{N}_n$ for $i\in[m],j\in[T]$. Then the asymptotic task-averaged regret for learning the algorithm parameter $s$ for $s$-linear rounding is $o_T(1)+2V\sqrt{m}+O(\sqrt{m})$.
\end{Thm}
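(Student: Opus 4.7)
The plan is to follow the template already used for MWIS, knapsack, and $k$-center clustering in the preceding theorems: first establish that the sequence of within-task loss functions $\ell_{i,j}(s)$ is $\frac{1}{2}$-dispersed in the parameter $s$, and then invoke Theorem~\ref{thm:tar} with $\beta=\frac{1}{2}$ to read off the task-averaged regret bound $o_T(1)+2V\sqrt m+O(\sqrt m)$.

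To establish dispersion, the first step is to identify the discontinuities of $\ell_{i,j}$ as a function of $s$ on a fixed instance $(A_{i,j},Z_{i,j})$. Since the $s$-linear rounding rule for coordinate $k$ switches between deterministic sign-rounding and probabilistic rounding precisely when $s$ crosses $|\langle u_k,Z_{i,j}\rangle|$, each instance contributes at most $n$ discontinuity locations, namely $s_k:=|\langle u_k,Z_{i,j}\rangle|$ for $k\in[n]$, where $u_k$ is the $k$-th column of the SDP relaxation solution. Because $Z_{i,j}\sim\mathcal{N}_n$ and each $u_k$ is a unit vector, the inner product $\langle u_k,Z_{i,j}\rangle$ is a standard normal, so the density of $s_k$ on the positive real line is bounded by $\sqrt{2/\pi}$. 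In other words, each discontinuity location is $\kappa$-bounded with $\kappa=\sqrt{2/\pi}$.

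A union bound over the $n$ discontinuities per instance then yields that the expected number of discontinuities in any interval of length $\epsilon$ is at most $O(\kappa n\epsilon)$ per instance. Feeding this one-dimensional per-instance discontinuity bound into the dispersion recipe of \cite{balcan2020semi} (or equivalently Theorem~\ref{thm:dispersion-recipe} specialized to $d=1$) shows that the sequence $\ell_{1,j},\dots,\ell_{m,j}$ is $\frac{1}{2}$-dispersed for each task $j$. This dispersion conclusion is exactly the one obtained in the single-task analysis of \cite{balcan2018dispersion} for $s$-linear rounding; the proposal is just to note that it holds per task and thus plugs directly into our meta-framework.

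With $\frac{1}{2}$-dispersion in hand, applying Theorem~\ref{thm:tar} (as in the proofs of Theorems for knapsack and $k$-center clustering) immediately gives the claimed $o_T(1)+2V\sqrt m+O(\sqrt m)$ task-averaged regret. The main technical obstacle is the density bound on $s_k$, but unit-normness of $u_k$ and standard-Gaussianness of $Z_{i,j}$ make this essentially immediate; the only subtlety is to ensure that no further regularity assumption on $A_{i,j}$ is needed beyond what is used in the SDP relaxation, which is indeed the case since the discontinuity structure depends only on $(u_k,Z_{i,j})$.
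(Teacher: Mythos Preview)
Your proposal is correct and follows essentially the same approach as the paper: identify the discontinuities at $s=|\langle u_k,Z_{i,j}\rangle|$, use that $\langle u_k,Z_{i,j}\rangle$ is standard normal to get a $\sqrt{2/\pi}$-bounded density on each discontinuity, invoke the dispersion recipe of \cite{balcan2020semi} (Theorem~\ref{thm:dispersion-recipe}) to obtain $\tfrac12$-dispersion, and then apply Theorem~\ref{thm:tar}. Your write-up is in fact slightly more careful than the paper's sketch in that you explicitly account for the $n$ discontinuities per instance via a union bound.
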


\begin{proof}
As noted in \cite{balcan2018dispersion}, since $Z_{i,j}$ are normal, the local of discontinuities $s=|\langle u_i,Z\rangle|$ are distributed with a $\sqrt{\frac{2}{\pi}}$-bounded density. Thus in any interval of length $\epsilon$, we have in expectation at most $\epsilon\sqrt{\frac{2}{\pi}}$ discontinuities. Theorem \ref{thm:dispersion-recipe} together with the general recipe from \cite{balcan2020semi} implies $\frac{1}{2}$-dispersion. The task-averaged regret bound is now a simple application of Theorem \ref{thm:tar}.
\end{proof}

Our results are an improvement over prior work which have only considered iid and (single-task) online learning settings. Similar improvements can be obtained for auction design, as described below. We illustrate this using a relatively simple auction, but the same idea applies for an extensive classes of auctions as studied in \cite{balcan2018general}.

\subsection{Posted price mechanisms with additive valuations} There are $m$ items and $n$ bidders with valuations $v_j(b_i),j\in[n],i\in[2^m]$ for all $2^m$ bundles of items. We consider additive valuations which satisfy $v_j(b)=\sum_{i\in b}v_j(\{i\})$. The objective is to maximize the social welfare (sum of buyer valuations). If the item values for each buyer have $\kappa$-bounded distributions, then the corresponding social welfare is dispersed and our results apply.

\begin{Thm}
Consider instances of posted price mechanism design problems with additive buyers and $\kappa$-bounded marginals of item valuations. Then the asymptotic task-averaged regret for learning the price which maximizes the social welfare is $o_T(1)+2V\sqrt{m}+O(\sqrt{m})$.
\end{Thm}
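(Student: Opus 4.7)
The plan is to follow the same template used for the other combinatorial problems in this appendix: establish that the per-instance social welfare, viewed as a function of the posted price $p$, is piecewise constant with discontinuities whose locations have bounded density, apply the dispersion recipe (Theorem~\ref{thm:dispersion-recipe}) to deduce $\frac{1}{2}$-dispersion, and then invoke Theorem~\ref{thm:tar}.

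First I would fix a task and examine how the social welfare depends on $p$. For additive valuations, when a buyer $j$ arrives and faces posted price $p$ per item, she purchases exactly the items $i$ with $v_j(\{i\}) \ge p$ among those still available. Thus as $p$ varies over $[0,1]$, the set of items each buyer would like to purchase changes only when $p$ crosses one of the single-item valuations $v_j(\{i\})$, and hence the realized social welfare is piecewise constant in $p$ with discontinuities contained in the finite set $\{v_j(\{i\}) : j \in [n], i \in [m]\}$, of size at most $mn$.

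Second, I would use the $\kappa$-bounded marginals assumption: each single-item valuation $v_j(\{i\})$ has a density bounded by $\kappa$, so for any interval of length $\epsilon$ the probability that a given discontinuity falls in that interval is at most $\kappa \epsilon$. Summing (or applying a union bound) across the at most $mn$ discontinuities in one instance and across $m$ instances of a task, the expected number of discontinuities falling in any interval of length $\epsilon$ is $O(\epsilon \kappa m n)$. This is precisely the hypothesis required by the recipe in Theorem~\ref{thm:dispersion-recipe} (with $d=1$, so the relevant hypersurfaces are points), which then yields that the sequence of social welfare functions is $\frac{1}{2}$-dispersed.

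Finally, with $\beta = \frac{1}{2}$ in hand, the result follows by directly applying Theorem~\ref{thm:tar}: the asymptotic task-averaged regret of running Algorithm~\ref{alg:meta} on the sequence of posted-price tuning problems is $o_T(1) + 2V\sqrt{m} + O(\sqrt{m})$, where $V$ is the task-similarity from \eqref{eq:tasksim}. The main (minor) subtlety to be careful about is in the first step, namely accounting correctly for the supply constraint: one has to check that the piecewise-constant structure is preserved even though a buyer's realized bundle depends on which items previous buyers cleared, but since the clearing decisions themselves change only when $p$ crosses a valuation, the overall welfare remains piecewise constant with discontinuities in the same finite set, which is all that the dispersion argument needs.
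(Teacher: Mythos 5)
Your overall strategy matches the paper's: show the welfare is piecewise constant in the price with discontinuities at the per-item valuations, use the $\kappa$-bounded marginals to bound the expected number of discontinuities in any small neighborhood, invoke the dispersion recipe (Theorem~\ref{thm:dispersion-recipe}) to get $\frac12$-dispersion, and then apply Theorem~\ref{thm:tar}. The one substantive difference is dimensionality. You take ``the price'' to be a single scalar $p$ applied to every item, so $d=1$, the hypersurfaces are points, and the union bound over the at most $mn$ valuation thresholds per instance suffices directly. The paper instead works with a vector of per-item prices $p=(p_1,\dots,p_m)$: discontinuities there are axis-parallel hyperplanes of the form $\{p : p_i = v_j(\{i\})\}$, and the recipe is applied along axis-aligned paths, with the observation that a given item's thresholds can only be crossed along the axis segment corresponding to that item. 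Your scalar reading is a legitimate (arguably more literal) interpretation of the theorem statement and is simpler; the paper's vector reading is the one its proof actually defends and is the more standard object in posted-price mechanism design. Either way the same $O(\epsilon\kappa mn)$ bound and the same conclusion follow. One smaller point: the supply-constraint subtlety you flag (items being unavailable after earlier buyers) is not discussed in the paper's proof, which implicitly treats each buyer as independently purchasing all items they value above the price; your note that the discontinuity locations remain the same finite set is correct and makes the argument robust to that modeling choice.
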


\begin{proof}
As noted in \cite{balcan2018dispersion}, the locations of discontinuities are along axis-parallel hyperplanes (buyer
$j$ will be willing to buy item $i$ at a price 
$p_i$
if and only if $v_j(\{i\}) \ge p_i$, each buyer-item pair in each instance corresponds to a hyperplane). Thus in any pair of points $p,p'$ (corresponding to pricing) at distance $\epsilon$, we have in expectation at most $\epsilon\kappa mn$ discontinuities along any axis-aligned path joining $p,p'$, since discontinuities for an item can only occur along axis-aligned segment for the axis corresponding to the item. Theorem \ref{thm:dispersion-recipe} now implies $\frac{1}{2}$-dispersion. The task-averaged regret bound is now a simple application of Theorem \ref{thm:tar}.
\end{proof}

\section{Additional experiments}\label{app: experiment}

% We include additional experiments to study how the meta-learning improves with number of tasks, and how the variance of the performance of our approach across different datasets can be understood by examining task similarity and data dispersion.

\subsection{Number of training tasks needed for meta-learning}

We also examine the number of training tasks that our meta-learning procedure needs to obtain improvements over the single-task baseline. We use a single test task, and a variable number of training tasks (0 through 10) to meta-learn the initialization. We use the same settings as in Section \ref{sec:experiments}, except the meta-learning experiments have been averaged over 20 iterations (to average over randomization in the algorithms). In Figure \ref{fig: regret vs meta-updates}, we plot the average regret against number of meta-updates performed before starting the test task, and compare against the single-task baselines. We observe gains with meta-learning with just $T=10$ tasks for the Omniglot dataset, and with even a single task in the Gaussian mixture dataset. The latter is likely due to a very high degree of task similarity across all the tasks (examined below), so learning on any task transfers very well to another task.

\begin{figure}[!h]
    \centering
    \begin{subfigure}[b]{0.4\textwidth}
    \centering
         \includegraphics[width=\textwidth]{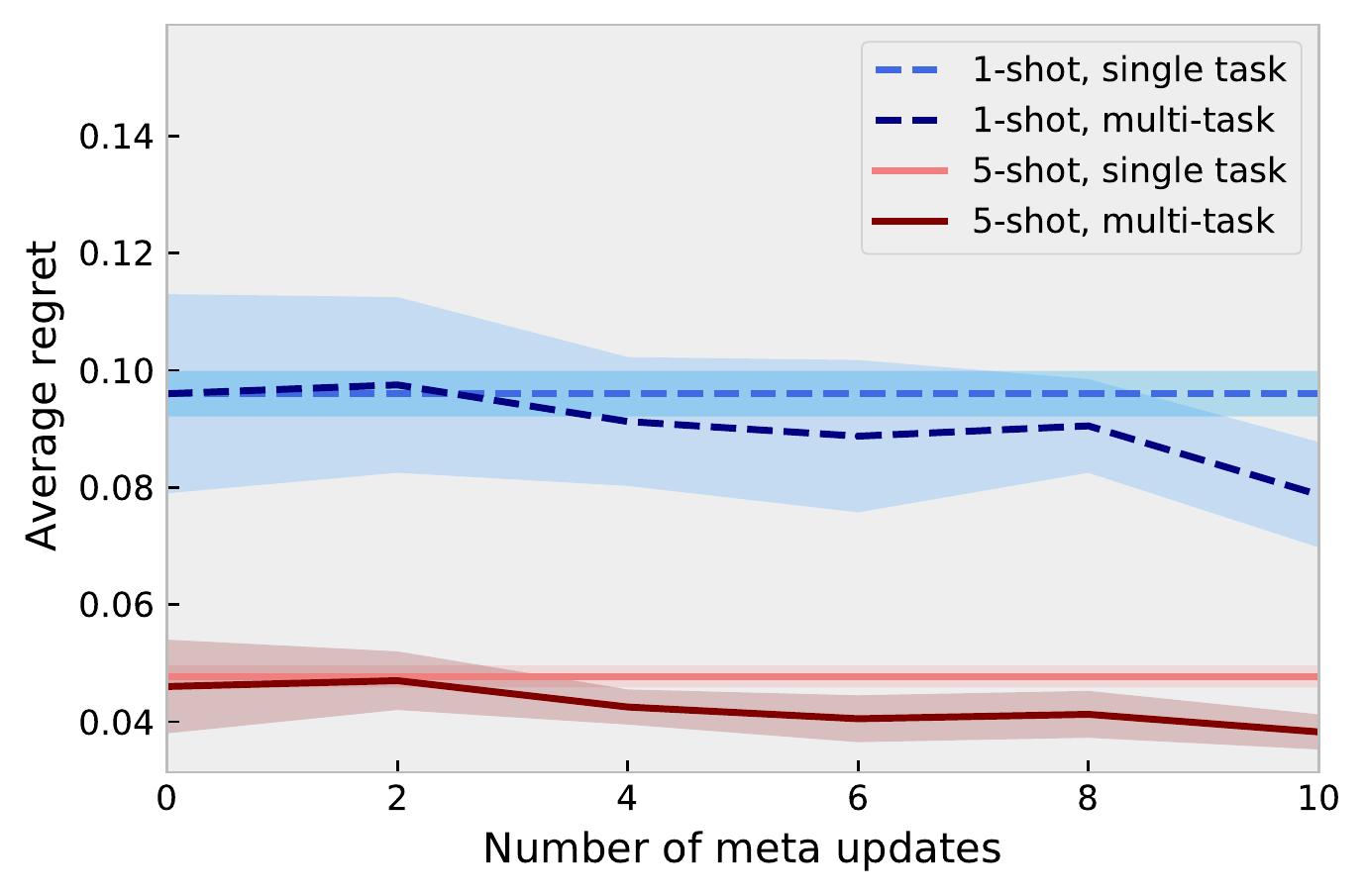}
    \caption{Omniglot}
  \end{subfigure}
    \begin{subfigure}[b]{0.4\textwidth}
    \centering
         \includegraphics[width=\textwidth]{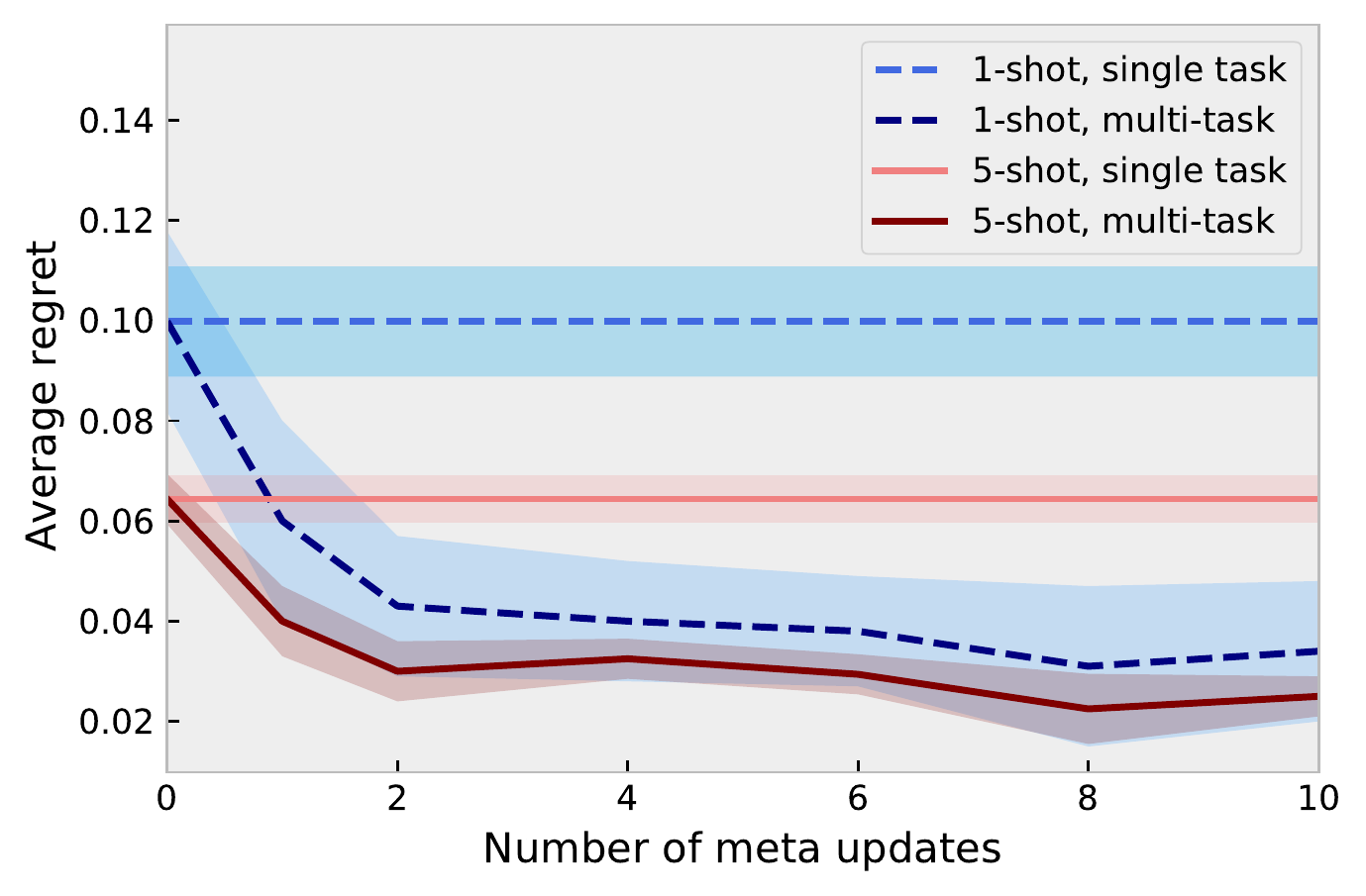}
    \caption{Gaussian mixture}
  \end{subfigure}
  \caption{Average regret vs. number of training tasks for meta-learning.}
\label{fig: regret vs meta-updates}
\end{figure}

\subsection{Task similarity and dispersion}

We also examine the task similarity of the different tasks by plotting the optimal values $\alpha^*_t$ of the clustering parameter $\alpha$ and the corresponding balls $\B(\alpha_t^\ast,m^{-\beta})$ used in our definition of task similarity (Figure \ref{fig: task similarity}). 

\begin{figure}[!h]
    \centering
    \begin{subfigure}[b]{0.35\textwidth}
    \centering
         \includegraphics[width=\textwidth]{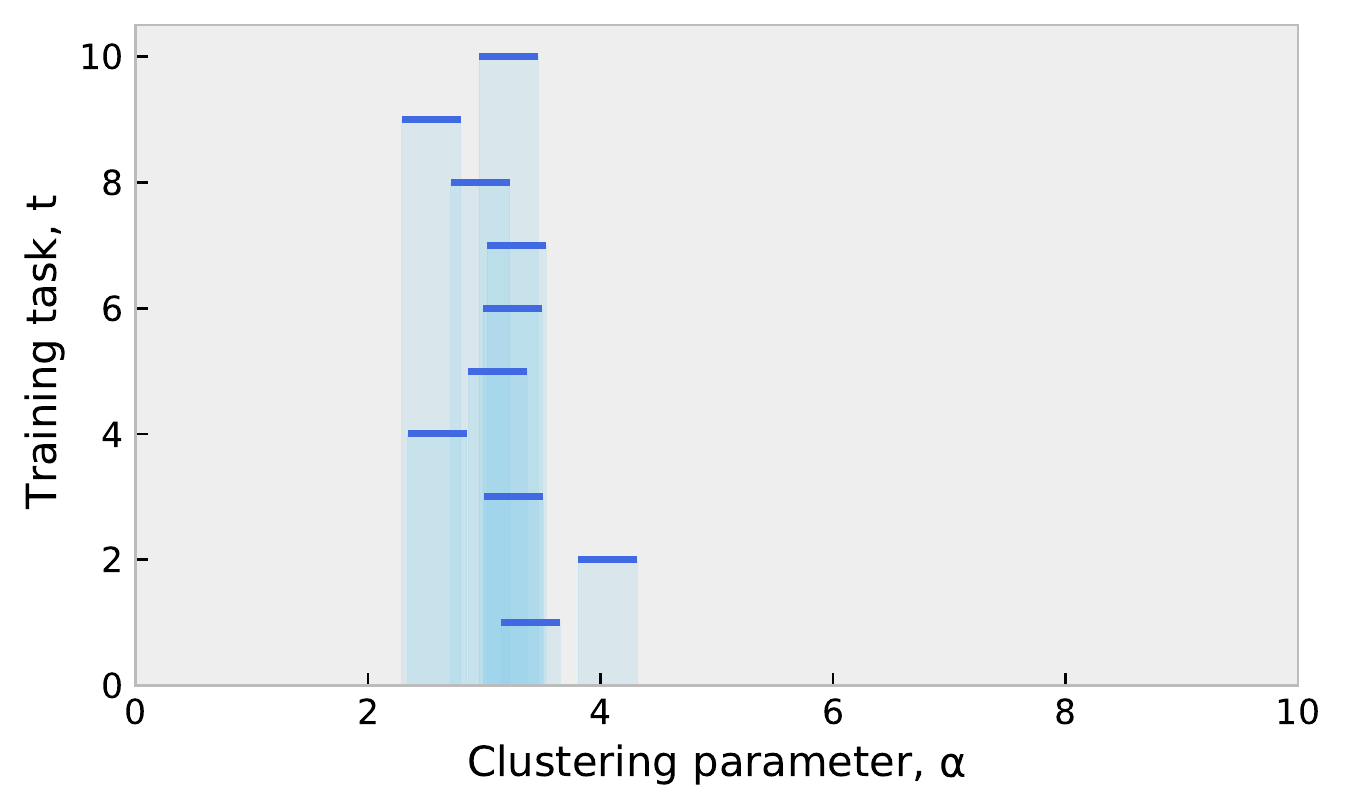}
    \caption{Omniglot}
  \end{subfigure}
    \begin{subfigure}[b]{0.35\textwidth}
    \centering
         \includegraphics[width=\textwidth]{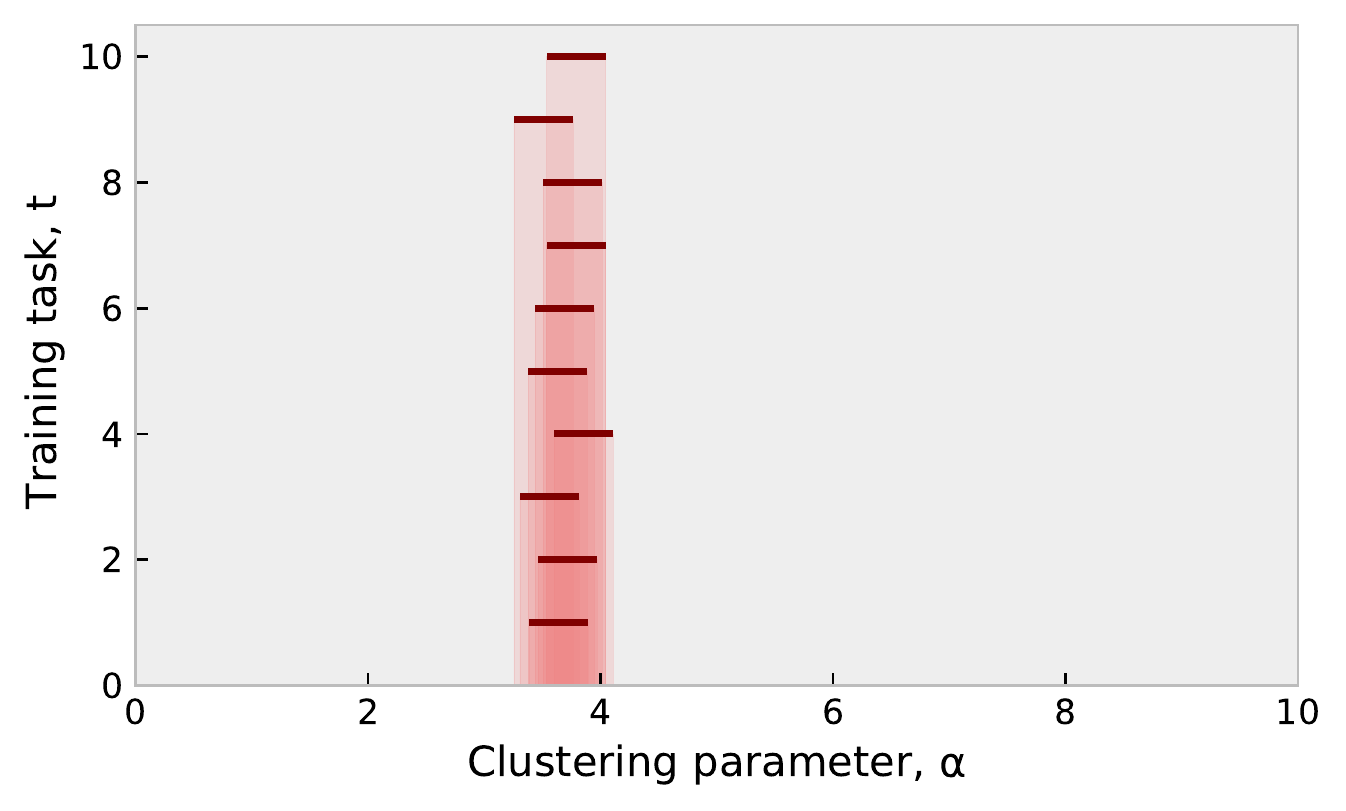}
    \caption{Gaussian mixture}
  \end{subfigure}
    \begin{subfigure}[b]{0.35\textwidth}
    \centering
         \includegraphics[width=\textwidth]{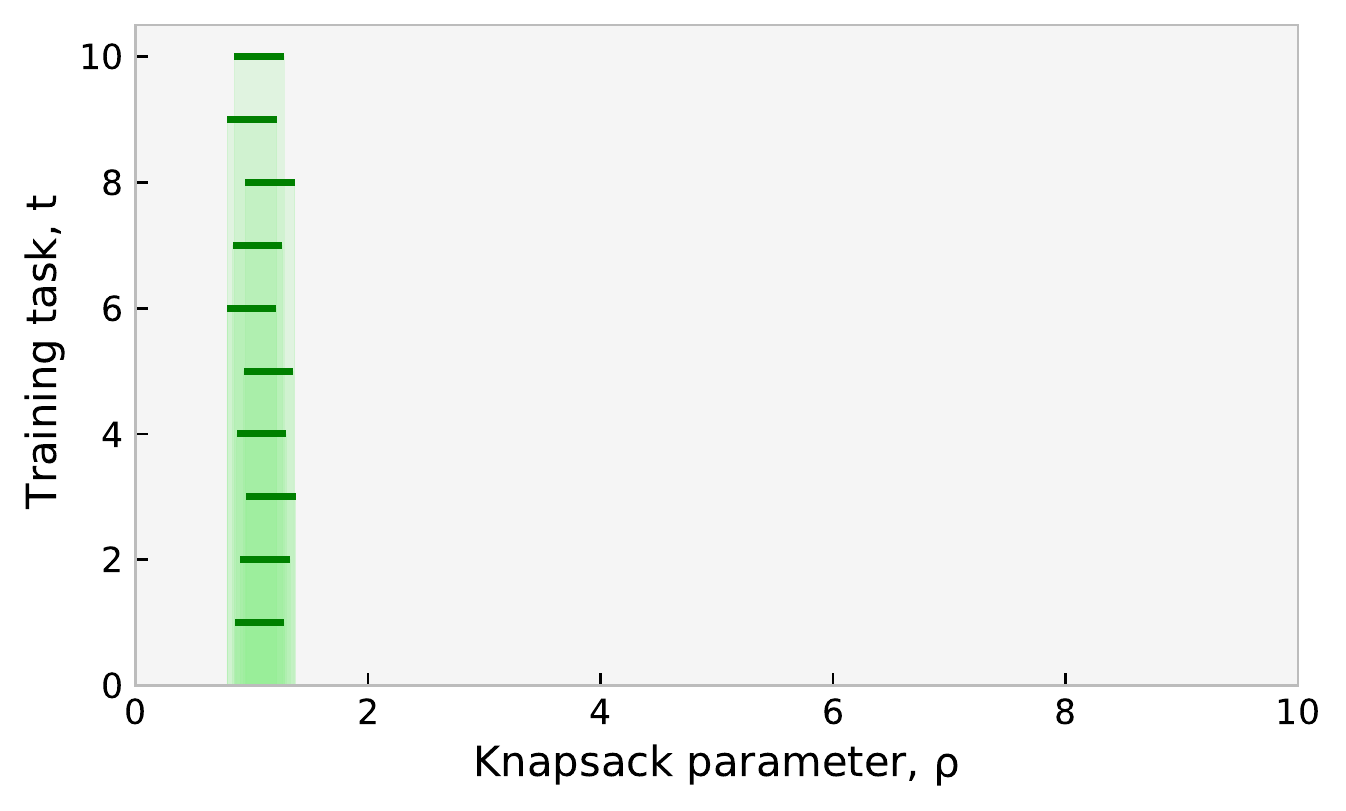}
    \caption{Knapsack}
  \end{subfigure}
  \caption{Location of optimal parameter values for the training tasks.}
\label{fig: task similarity}
\end{figure}
%\subsection{Good performance on well-separated clusters}

The intervals of the parameter induced by these balls correspond to the discretization used by Algorithm \ref{alg:ftrl}. We notice a stronger correlation in task similarity for the Gaussian mixture clustering tasks, which implies that meta-learning is more effective here (both in terms of learning test tasks faster, and with lower regret). For knapsack the task similarity is also high, but it turns out that for our dataset there are very `sharp peaks' at the optima of the total knapsack values as a function of the parameter $\rho$. So even though meta-learning helps us get within a small ball of the optima, a few steps are still needed to converge and we do not see the single-shot benefits of meta-learning as we do for the Gaussian clustering experiment. 

\begin{figure}[!h]
    \centering
    \begin{subfigure}[b]{0.4\textwidth}
    \centering
         \includegraphics[width=\textwidth]{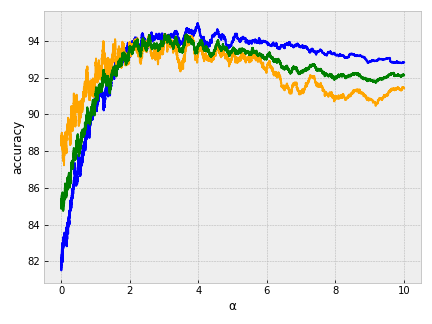}
    \caption{Clustering (Gaussian mixture dataset)}
  \end{subfigure}
    \begin{subfigure}[b]{0.4\textwidth}
    \centering
         \includegraphics[width=\textwidth]{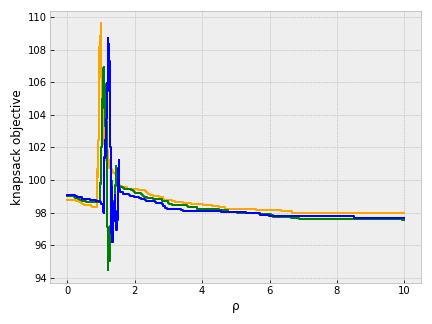}
    \caption{Greedy Knapsack}
  \end{subfigure}
  \caption{Average performance (over algorithm randomization) for a few tasks as a function of the configuration parameter. This explains why, despite high task similarity in either case, few-shot meta-learning works better for the Gaussian mixture clustering.}
\label{fig: average performance}
\end{figure}

\end{document}